\newtheorem{lemma}{Lemma}
\newtheorem{theorem}{Theorem}
\DeclareMathOperator*{\argmax}{arg\,max}
\newcolumntype{C}[1]{>{\centering\arraybackslash}p{#1}}
\def \bx {\mathbf{x}}
\def \bbE {\mathbb{E}}
\def \br {\boldsymbol{r}}
\def \bg {\boldsymbol{g}}
\def \bR {\boldsymbol{R}}
\def \bs {\boldsymbol{s}}
\def \be {\boldsymbol{e}}
\def \bx {\boldsymbol{x}}
\def \bc {\boldsymbol{c}}
\def \bG {\boldsymbol{G}}
\def \cD {\mathcal{D}}
\def \cN {\mathcal{N}}
\def \cL {\mathcal{L}}
\def \cA {\mathcal{A}}
\def \cV {\mathcal{V}}
\def \bM {\boldsymbol{M}}
\def \Indi {\mathbbm{1}}
\def \Pr {\mathrm{Pr}}
\title{Goal-Conditioned Supervised Learning for Multi-Objective Recommendation}
\author{
   Shijun Li$^{1}$,
   Hilaf Hasson$^{2}$,
   Jing Hu$^{3}$, 
   Joydeep Ghosh$^{4}$ 
   \\
  $^{1,4}$The University of Texas at Austin, $^{2}$Intuit AI Research, $^{3}$Intuit Inc.\\
  \texttt{shijunli@utexas.edu}\\
  \ \ \
  }
\begin{document}
\maketitle

\begin{abstract}

Multi-objective learning endeavors to concurrently optimize multiple objectives using a single model, aiming to achieve high and balanced performance across diverse objectives. However,  this often entails a more complex optimization problem, particularly when navigating potential conflicts between objectives, leading to solutions with higher memory requirements and computational complexity. This paper introduces a Multi-Objective Goal-Conditioned Supervised Learning (MOGCSL) framework for automatically learning to achieve multiple objectives from offline sequential data. MOGCSL extends the conventional GCSL method to multi-objective scenarios by redefining goals from one-dimensional scalars to multi-dimensional vectors. It benefits from naturally eliminating the need for complex architectures and optimization constraints. Moreover, MOGCSL effectively filters out uninformative or noisy instances that fail to achieve desirable long-term rewards across multiple objectives. We also introduces a novel goal-selection algorithm for MOGCSL to model and identify ``high'' achievable goals for inference.

While MOGCSL is quite general, we focus on its application to the next action prediction problem in commercial-grade recommender systems. In this context, any viable solution needs to be reasonably scalable and also be robust to large amounts of noisy data that is characteristic of this application space. We show that MOGCSL performs admirably on both counts by extensive experiments on real-world recommendation datasets. Also, analysis and experiments are included to explain its strength in discounting the noisier portions of training data in recommender systems with multiple objectives.

\end{abstract}

\section{Introduction}\label{sec:intro}

Multi-objective learning techniques
typically aim to train a single model to determine a policy for multiple objectives, which are often defined on different types of tasks \cite{ruder2017overview, sener2018multi, zhang2021survey}. 
For instance, two common tasks in recommender systems are to recommend items that users may click or purchase, and hence the corresponding two objectives are defined as pursuing higher click rate and higher purchase rate. 
However, learning for multiple objectives simultaneously is often nontrivial, particularly when there are potential inherent conflicts among these objectives \cite{sener2018multi}.

Existing approaches to multi-objective learning broadly address this optimization issue by formulating and optimizing a loss function that takes into account multiple objectives in a supervised learning paradigm. 
Some previous research focused on designing model architectures specifically for multi-objective scenarios, including the shared-bottom structure \cite{ma2018modeling}, the experts and task gating networks \cite{ma2018modeling}, and so on. Another line of research studies how to constrain the optimization process based on various assumptions regarding how to assign reasonable loss weights \cite{liu2019end} or dynamically adjust gradients \cite{yu2020gradient}. \citet{lin2019pareto} proposes an optimization algorithm within the context of Pareto efficiency. 
We remark that these approaches often introduce substantial space and computational complexity \cite{ma2018modeling, zhang2021survey}. Also, they treat all the data uniformly. 
In recommender systems, sometimes none of the items shown to a user is of interest, rendering their choice uninformative. Also users are often distracted or their interests temporarily change. These are some of the reasons that the observed interaction data in real settings suffers from substantial uninformative or noisy components that are better left discounted. Existing approaches do not cater well to this aspect of our focus application.

To address these issues, we propose a novel method called \textbf{Multi-Objective Goal-Conditioned Supervised Learning (MOGCSL)}. In this framework, we first apply goal-conditioned supervised learning (GCSL) \cite{yang2022rethinking, liu2022goal} to the multi-objective recommendation problem by introducing a new multi-dimensional goal. At its core, MOGCSL aims to learn primarily from the behaviors of those sessions where the long-term reward ends up being high, thereby discounting noisy user choices coming with low long-term rewards. 
 Unlike conventional GCSL, however, we represent the reward gained from the environment with a vector instead of a scalar. Each dimension of this vector indicates the reward for a certain objective. The ``goal'' in MOGCSL can then be defined as a vector of desirable cumulative rewards on all of the objectives given the current state. 
By incorporating these goals as input, MOGCSL learns to rely primarily on high-fidelity portions of the data with less noise corresponding to multiple objectives, which helps to better predict users' real preference.
Extensive experiments indicate that  MOGCSL significantly outperforms other baselines and benefits from lower complexity.

For inference of GCSL, most previous works employ a simple goal-choosing strategy \cite{DecisionTransformer, xin2022rethinking} (e.g., some multiple of the average goal in training). Although we observe that simple choices for MOGCSL do reasonably well in practice, we 
also introduce a goal-choosing algorithm
that estimates the distribution of achievable goals using variational auto-encoders, and automates the selection of desirable ``high'' goals on multiple objectives as input for inference. By comparing the proposed method with simpler statistical strategies, we gain valuable insights into the goal-choosing process and trade-offs for practical implementation.

Our key contributions can be summarized as follows:

\begin{itemize}[leftmargin=*]
    \item We introduce a general supervised framework called MOGCSL for multi-objective recommendations that, by design, 
    selectively leverages training data with desirable long-term rewards on multiple objectives.
     We implement this approach using a transformer encoder optimized  with a standard cross-entropy loss,
    avoiding more complex architectures or optimization constraints that are typical for multi-objective learning. Empirical experiments on real-world e-commerce datasets demonstrate the superior efficacy and efficiency of MOGCSL.

    \item As a part of MOGCSL, we conduct a formal analysis of goal properties for inference with a theorem. Then we introduce a novel goal-choosing algorithm that can model the distribution of achievable goals over interaction sequences and choose  desirable ``high''  goals across multiple objectives.

    \item 
    We conduct a comprehensive analysis of MOGCSL's working mechanism and are the first to reveal its ability to effectively mitigate the harmful effects of potentially noisy instances in the training data with multiple objectives.

\end{itemize}

\section{Related Work}\label{sec:related}

\noindent \textbf{Multi-Objective Learning.}
Multi-objective learning typically investigates the construction and optimization of models that can simultaneously achieve multiple objectives. Existing research focuses mainly on resolving the problem by model architecture designs \cite{ma2018modeling, misra2016cross} and optimization constraints \cite{liu2019end, yu2020gradient, lin2019pareto}. 
All of these works give equal weight to all instances in the training data, instead of forcefully distinguishing noisy data from non-noisy data by considering their different effects on multiple objectives. 
This is fine in many applications but problematic in commercial recommendation systems. 
Moreover, these approaches often introduce substantial space and computational complexity \cite{zhang2021survey}, making them more challenging for large-scale applications in the real world.
For example, MMOE \cite{ma2018modeling} requires constructing separate towers for each objective, significantly increasing the model size. DWA \cite{liu2019end} necessitates recording and calculating loss change dynamics for each training epoch, while PE \cite{lin2019pareto} demands substantial computational resources to solve an optimization problem for Pareto efficiency. A recent work \cite{liu2024famo} aims to improve the efficiency of multi-objective learning, but it still suffers from the complexity of calculating gradient similarities to determine task weights.

\noindent \textbf{Goal-Conditioned Supervised Learning.}
In contrast, we propose resolving the multi-objective optimization dilemma within the framework of GCSL \cite{liu2022goal, DecisionTransformer, TrajectoryTransformer, zheng2022online}. 
Typically, GCSL can be directly combined with various sequential models with minor adaptations and trained entirely on offline data. This paradigm effectively transforms offline reinforcement learning into a supervised learning problem. However, as far as we know, most existing works focus on optimizing a single objective. Our work extends GCSL to the multi-objective setting, eliminating the need for scalarization functions or other constraints during training. 
Additionally, although some works have explored how to assign more valuable goals to enhance GCSL training \cite{ajay2020opal, yamagata2023q, zhuang2024reinformer}, the properties and determination inference goals, especially multi-dimensional ones, remain less explored. We propose a novel algorithm to model the achievable goals and automatically choose desirable goals on multiple objectives as input during inference stage. 
Note that we are solving for next action prediction problem, and use long-term rewards only as extra information, unlike multi-objective reinforcement learning approaches that aim to maximize cumulative returns across multiple objectives \cite{cai2022constrained, stamenkovic2022choosing}. Also, their evaluation principles are different, typically relying on long-term metrics and synthetic environments. Hence we don't compare with such approaches in this paper. 


\section{Methodology}


In this section, we first illustrate the general optimization paradigm of MOGCSL. Then we expound on the training process of MOGCSL and the proposed goal-choosing algorithm for inference. Furthermore, we give a detailed analysis of the capability of MOGCSL to discount potentially highly noisy samples in the training data.

\subsection{A New View for Multi-Objective Learning}\label{sec:new_view}

Multi-objective learning is typically formulated as an optimization problem over multiple losses \cite{ma2018modeling, misra2016cross, yu2020gradient, lin2019pareto}, each defined on a distinct objective. Consider a dataset $\mathcal{D}=\{(\mathbf{x}_i, y^1_i, y^2_i, ..., y^n_i)\}_{i \in [1,M]}$, where $\mathbf{x}_i$ represents the feature, $y^j_i$ is the ground-truth score on the $j$-th objective, and $M$ is the total number of data points. For a given model $f(\mathbf{x}; \boldsymbol{\theta})$, multiple empirical losses can be computed, one per objective as  $\cL_j(\boldsymbol{\theta})=\bbE_{(\bx, y^j) \in \cD}[\cL(f(\bx; \boldsymbol{\theta}), y^j)]$. 
The model can then be optimized by minimizing a single loss, which is obtained by combining all the losses through a weighted summation as: \: $\min_{\boldsymbol{\theta}} \sum_{j=1}^n w^j \cL_j(\boldsymbol{\theta})$.

A fundamental question is how to assign these weights and how to regulate the learning process to do well on all the objectives concurrently. Earlier research sought to address this issue based on assumptions regarding the efficacy of certain model architectures or optimization constraints, which may not be generally valid and can significantly increase complexity  \cite{zhang2021survey}.

In contrast, we propose to approach the learning and optimization for multi-objective learning from a different perspective. Specifically, we posit that the interaction process between the agent and the environment can be formalized as an Multi-Objective Markov Decision Process (MOMDP) \cite{roijers2013survey}. 
Denote the interaction trajectories collected by an existing agent as $\mathcal{D}=\{\tau_i\}_{i \in [1,M]}$. In the context of recommender systems, each trajectory $\tau$ records a complete recommendation session between a user entering and exiting the recommender system, such that $\tau=\{(\bs_t, a_t, \br_t)\}_{t \in [1,|\tau|]}$.  A state $\bs_t \in \mathcal{S}$ is taken as the representation of user's preferences at a given timestep $t$. An action $a_t$ is recommended from the action space $\mathcal{A}$ which includes all candidate items, such that $|\cA|=|\cV|=N$ where $\cV$ denotes the set of all items. $R(s_t, a_t)$ is the reward function, where $\br_t=R(s_t, a_t)$ means the agent receives a reward
$\br_t$ after taking an action $a_t$ under state $\bs_t$. Note that the reward function $R(\bs_t, a_t)$ in MOMDP is represented by a multi-dimensional vector instead of a scalar.

In this context, all the objectives can be quantified using reward $\br_t$ at each time step. Specifically, since $\br_t$ is determined by user's behavior in response to recommended items, it naturally reflects the recommender's performance on these objectives. For example, if the user clicks the recommended item $a_t$, the value on the corresponding dimension of $\br_t$ can be set to 1; otherwise, it remains 0.
In sequential recommendation scenarios, the target of the agent is to pursue better performance at the session level. Session-level performance can be evaluated by the cumulative reward from the current timestep to the end of the trajectory:
\begin{equation}
\label{eq:goal}\small
\bg_t=\sum_{t'=t}^{|\tau|}  \br_{t'} ,
\end{equation}
where $\bg_t$ can be called as a ``goal'' in the literature of goal-conditioned supervised learning \cite{yang2022rethinking}.

Then, the target of mutli-objective learning for recommender systems can be formulated as \emph{developing a policy that achieves satisfactory performance 
(i.e., the goals) across multiple objectives in recommendation sessions}. In this research, we address this problem within the framework of GCSL. During the training stage, the aim is to determine the optimal action to take from a given current state in order to achieve the specified goal. The agent, denoted as $\pi_{\theta}$, is trained by maximizing the likelihood of trajectories in the offline training dataset $\cD_{tr}$ through an autoregressive approach, expressed as $\argmax_{\theta} \mathbb{E}_{\cD_{tr}} [log\pi_{\theta}(a|\bs,\bg)] $. Notably, there are no predefined constraints or assumptions governing the learning process. During the inference stage, when an achievable and desirable goal is specified, the model is expected to select an action based on the goal and the current state, with the aim of inducing behaviours to achieve that goal.

\subsection{MOGCSL Training}\label{sec:training}

The initial step of MOGCSL training is relabeling the training data by substituting the rewards with goals. Specifically, for each trajectory $\tau \in \cD_{tr}$, we replace every tuple $(\bs_t, a_t, \br_t)$ with $(\bs_t, a_t, \bg_t)$, where $\bg_t$ is  defined according to Eq. (\ref{eq:goal}). Subsequently, we employ a sequential model  \cite{kang2018self} based on Transformer-encoder (denoted as \emph{T-enc}) to encode the users' sequential behaviors and obtain state representations. We chose a transformer-based encoder due to its widely demonstrated capability in sequential recommendation scenarios \cite{kang2018self, li2023strec}. However, other encoders, such as GRU or CNN, can also be used. Specifically, let the interaction history of a user up to time $t$ be denoted as $v_{1:t-1} = \{v_1, ..., v_{t-1}\}$. We first map each item $v \in \cV$ into the embedding space, resulting in the embedding representation of the history: $\be_{1:t-1} = [\be_1, ..., \be_{t-1}]$. Then we encode $\be_{1:t-1}$ by \emph{T-enc}. Since the current timestep $t$ is also valuable for estimating user's sequential behavior, we incorporate it via a timestep embedding denoted as $emb_t$ through a straightforward embedding table lookup operation.
Similarly, we derive the embedding of the goal $emb_{\bg_t}$ through a simple fully connected (FC) layer.
The final representation of state $\bs_t$ is derived by concatenating the sequential encoding, timestep embedding and goal embedding together:
\begin{equation}\small
\label{eq:transf}
emb_{\bs_t}=Concat(\emph{T-enc}(\be_{1:t-1}), emb_t, emb_{\bg_t}).
\end{equation}

\begin{algorithm}[t]\small
 \DontPrintSemicolon
  \textbf{Input:} training data $\cD_{tr}$, batch size $B$, model parameters $\theta$ \; 
  \textbf{Intialization:} initialize parameters $\theta$  \; 
  Relabel all the rewards with goals according to Eq. (\ref{eq:goal}) \;
  \Repeat{convergence}{
     Randomly sample a batch of $(\bs_t, a_t, \bg_t)$ from $\cD_{tr}$ \;
     Compute the representation $\bM_{\bs_t, \bg_t}$ via Eq. (\ref{eq:transf})-Eq.(\ref{eq:atten}) \;
     Derive the prediction logits via Eq. (\ref{eq:ouput}) \;
     Calculate the loss function $ \cL(\theta)$ via Eq. (\ref{eq:loss}) \;
     Update $\theta$ by minimizing $ \cL(\theta)$ with stochastic gradient descent: $\theta \leftarrow \theta- \eta \frac{\partial  \cL(\theta)}{\partial \theta}$
    }
  \caption{Training of MOGCSL}
\label{alg:training}
\end{algorithm}
\vspace{-1mm}

To better capture the mutual information, we feed the state embedding into a self-attention block:
\begin{equation}\small
\label{eq:atten}
\bM_{\bs_t, \bg_t}=Atten(emb_{\bg_t})).
\end{equation}

Then we use an MLP to map the derived embedding into the action space, where each logit represents the preference of taking a specific action (i.e., recommending an item):
\begin{equation}\small
\label{eq:ouput}
 [\pi_\theta(v^1|\bs_t, \bg_t),...,\pi_\theta(v^{N} | \bs_t, \bg_t)]   =\mathbf{\delta}(MLP( \bM_{\bs_t, \bg_t})),
\end{equation}

where $v^i$ denotes the $i$-th item in the candidate pool, $\mathbf{\delta}$ is the soft-max function, and $\theta$ denotes all parameters of this model. The model structure is shown in Appendix \ref{fig:structure}.

The training objective is to correctly predict the subsequent action that is mostly likely lead to a specific goal given the current state. As discussed in Section \ref{sec:new_view}, each trajectory of user's interaction history represents a successful demonstration of reaching the goal that it actually achieved. As a result, the model can be naturally optimized by minimizing the expected cross-entropy as:
\begin{equation}\small
\label{eq:loss}
\cL(\theta) = \bbE_{(\bs_t, a_t, \bg_t) \in \cD_{tr}}[ - log(\pi_\theta(a_t|\bs_t, \bg_t))].
\end{equation}

The training process is illustrated in Algorithm \ref{alg:training}.

\subsection{MOGCSL Inference}\label{sec:inference}

After training, we derive a model $\pi_\theta(a|\bs, \bg)$ that predicts the next action based on the given state and goal. However, while the goal can be accurately computed through each trajectory in the training data via Eq. (\ref{eq:goal}), we must assign a desirable goal as input for the new state encountered during inference.  GCSL approaches typically determine this goal-choosing strategy based on simple statistics calculated from the training data. E.g., \citet{DecisionTransformer} and \citet{zheng2022online} set the goals for all states at inference as the product of the maximal cumulative reward in training data and a fixed factor serving as a hyperparameter. Similarly, \citet{xin2022rethinking} derive the goals for inference at a given timestep by scaling the mean of the cumulative reward in training data at the same timestep with a pre-defined factor. However, a central yet unexplored question is: \emph{what are the general characteristics of the goals and how can we determine them for inference in a principled manner? } 

In this paper, we investigate the distribution of the multi-dimensional goals that can be achieved during inference by first stating the following theorem. Proof is given in Appendix \ref{sec:proof}.

\begin{theorem}
Assume that the environment is modeled as an MOMDP. Consider a trajectory $\tau$ that is generated by the policy $\pi(a|\bs, \bg)$ given the initial state $\bs_1$ and goal $\bg_1$, the distribution of goals $\bg^a$ (i.e., cumulative rewards) that the agent actually achieves throughout the trajectory is determined by $(\bs_1, \bg_1, \pi)$. 
\label{thm}
\end{theorem}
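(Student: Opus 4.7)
The plan is to show that, given the MOMDP specification (state space, action space, transition kernel, vector-valued reward function) as fixed environmental quantities, the entire trajectory distribution $p(\tau\mid \bs_1,\bg_1,\pi)$ factorizes as a product of terms each involving only $\pi$, the transition kernel, and quantities recursively computable from $(\bs_1,\bg_1)$. Since the achieved cumulative reward $\bg^a=\sum_{t=1}^{|\tau|}\br_t$ is a measurable function of $\tau$, its law is the pushforward of this trajectory distribution, which is then automatically determined by $(\bs_1,\bg_1,\pi)$.

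Concretely, I would first fix a rollout convention consistent with Eq.~(\ref{eq:goal}): at inference the conditioning goal is updated by $\bg_{t+1}=\bg_t-\br_t$, so that $\bg_t$ is a deterministic function of $(\bg_1,\br_1,\dots,\br_{t-1})$. Writing the transition kernel of the MOMDP as $P(\bs_{t+1},\br_t\mid \bs_t,a_t)$, I would expand
\begin{equation*}
p(\tau\mid \bs_1,\bg_1,\pi)=\prod_{t=1}^{|\tau|}\pi(a_t\mid \bs_t,\bg_t)\,P(\bs_{t+1},\br_t\mid \bs_t,a_t),
\end{equation*}
and verify by induction on $t$ that every factor on the right is expressible in terms of $(\bs_1,\bg_1,\pi)$ and the fixed MOMDP data: the base case uses $(\bs_1,\bg_1)$ directly, and the inductive step uses $\bg_{t+1}=\bg_t-\br_t$ together with the Markov property to extend the joint law from step $t$ to step $t+1$.

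Having established that $p(\tau\mid \bs_1,\bg_1,\pi)$ is well-defined and depends only on $(\bs_1,\bg_1,\pi)$, I would conclude by noting that $\bg^a$ is a deterministic functional of $\tau$, so
\begin{equation*}
\Pr(\bg^a\in A\mid \bs_1,\bg_1,\pi)=\int \mathbbm{1}\!\left[\textstyle\sum_{t}\br_t\in A\right]\,p(\tau\mid \bs_1,\bg_1,\pi)\,d\tau,
\end{equation*}
which is a function of $(\bs_1,\bg_1,\pi)$ alone. A brief remark can handle trajectories of random length $|\tau|$ by treating termination as an absorbing state (or by marginalizing over horizon), so the argument extends to the session-level setting of the paper.

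The main obstacle is not a deep technical one but a definitional one: pinning down how the conditioning goal evolves during the rollout. If $\bg_t$ were instead held fixed at $\bg_1$, or shifted by some other rule, the factorization above would still hold, but the dependence of $\pi(a_t\mid \bs_t,\bg_t)$ on history would change and must be tracked carefully in the induction. The cleanest exposition therefore states the rollout convention explicitly up front and then runs a short inductive argument on trajectory prefixes; I expect the rest to follow from standard MDP measure-theoretic bookkeeping.
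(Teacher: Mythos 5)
Your proposal is correct and follows essentially the same route as the paper's proof: both hinge on the rollout convention $\bg_{t+1}=\bg_t-\br_t$ (so that $\bg_t$ is a deterministic function of $\bg_1$ and past rewards) together with an induction on trajectory prefixes, which the paper packages as a lemma showing the law of $(\bs_t,\bg_t)$ is determined by $(\bs_1,\bg_1,\pi)$ before integrating the joint reward law over the set $\{\sum_t \br_t=\bg\}$. Your version is slightly more streamlined in that it factorizes the full trajectory law once and concludes by a pushforward of $\bg^a$ as a measurable functional of $\tau$, rather than working through the chain of conditionals $P(\br_n\mid\br_{n-1},\dots,\br_1)$ as the paper does, but the substance is identical.
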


Based on this theorem, we'd like to learn the distribution of $\bg^a$ conditioned on ($\bs_1, \bg_1, \pi)$, denoted as $P(\bg^a|\bs_1, \bg_1, \pi)$. 
This distribution can be generally learnt by generative models such as GANs \cite{mirza2014conditional} and diffusion models \cite{ho2022classifier}. 
In this paper, we propose the use of a conditional variational auto-encoder (CVAE) \cite{sohn2015learning} due to its simplicity, robustness, and ease of formulation.


\begin{algorithm}[t]\small
 \DontPrintSemicolon
  \textbf{Input:} state $\bs'$, sample size $K$, policy model $\pi$, utility principle $U$, prior $q(\bg'|\bs')$, distribution of achievable goals $P(\bg^a|\bs',\bg',\pi)$ \; 
  \textbf{Intialization:} set of potential input goals $\bG'=\emptyset$, set of expected achievable goals $\bG^a=\emptyset$  \; 
  \For{k = $1, \ldots, K$}{
     Sample a $\bg'_k$ from $q(\bg'|\bs')$ \;
     Compute the expectation of the achievable goal through sampling: 
     $\tilde{\bg}^a_k= \bbE_{\bg^a_k \sim P(\cdot|\bs',\bg'_k,\pi) } \bg^a_k$\;
      $\bG' = \bG' \cup \bg'_k$ \;
      $\bG^a= \bG^a \cup \tilde{\bg}^a_k$ \;
    }
  Choose the best $\tilde{\bg}^a_b$ from $\bG^a$ according to $U(\tilde{\bg}^a)$ \;
  Choose corresponding $\bg'_b$ from $\bG'$ \;
  \textbf{Return:} $\pi(\cdot|\bs',\bg'_b)$
  \caption{Inference of MOGCSL}
\label{alg:inference}
\end{algorithm}
\vspace{-1mm}

This distribution can be learned directly on the training data $\cD_{tr}$. Specifically, for each $(\bs,\bg) \in \cD_{tr}$
, $\bg$ should be a sample from the distribution of the achievable goals by the policy $\pi$, given the initial state $\bs$ and input goal $\bg$.
That's because the policy $\pi$ is trained to imitate the actions demonstrated by each data point in $\cD_{tr}$, where the achieved goal of the trajectory starting from $(\bs,\bg)$ is exactly $\bg$.
Let $\bc=(\bs,\bg,\pi)$.  The loss function is:
\begin{equation}\small
\label{eq:loss_cvae}
\begin{split}
\cL_{CVAE1} = \bbE_{(\bs, \bg) \in \cD_{tr}, z \sim Q_1 } [ log P_1(\bg|z, \bc)+ D_{KL}(Q_1(z|\bg,\bc)||P(z))], 
\end{split}
\end{equation}

where $Q_1(z|\bg,\bc)$ is the encoder and $P_1(\bg|z,\bc)$ is the decoder. Based on Gaussian distribution assumption, they can be written as $Q_1=\cN(\mu(\bg,\bc), \Sigma(\bg,\bc))$ and $P_1=\cN(f_{CVAE1}(z,\bc), \sigma^2 I)$ respectively, where $z \sim \cN(0, I)$. Then we can derive a sample of $\bg^a$ by inputting a sampled $z$ into $f_{CVAE1}$.

On the inference stage, given a new state $\bs'$, we first sample a set of goals $\bg'$ as the possible input of $\pi$ through a learnable prior $q(\bg'|\bs')$. Similarly, we learn this prior via another CVAE on the training data. The loss is:
\begin{equation}\small
\label{eq:loss_vae}
\begin{split}\small
\cL_{CVAE2} = \bbE_{(\bs, \bg) \in \cD_{tr}, z \sim Q_2 }[ log P_2(\bg|z,\bs)+ D_{KL}(Q_2(z|\bg,\bs)||P(z))].
\end{split}
\end{equation}

Finally, we'll choose a desirable goal as the input along with the new state $\bs'$  encountered in inference by sampling from the two CVAE models.  Specifically, we propose to: (1) sample from the prior $q(\bg'|\bs')$ to get a set of potential input goals, denoted as $\bG'$, (2) for each $\bg' \in \bG'$, estimate the expectation of the actually achievable goal $\tilde{\bg}^a$ by sampling from $P(\cdot|\bs',\bg',\pi)$ and taking the average, (3) choose a best goal as input for inference from $\bG'$ according to the associated expected $\tilde{\bg}^a$ by a predefined utility principle $U(\tilde{\bg}^a)$, which generally tends to pick up a ``high'' goal to achieve larger rewards on multiple objectives. The exact definition can be flexible with practical requirements regarding to the importance of different objectives. E.g., choose by a predefined partial ordering
\footnote{{See Section \ref{sec:goal-generation} for details in our experiments.}}.
Note that the utility function can be adapted for specific cases without affecting the correctness of our proposed method, as the desired goals are always set among the set of achievable goals.
See Algorithm \ref{alg:inference} for detailed inference pseudocode.

\subsection{Analysis of Denoising Capability} \label{sec:theor}
An important benefit of MOGCSL is its capability to remove harmful effects of potentially noisy instances in the training data by leveraging the multiple-objective goals. 
To illustrate this, we consider the following setup that is common in recommender systems. There is a recommender system that has been operational, and recording data. At each interaction, the system shows the user a short list of items. The user then chooses one of these items. In the counterfactual that the recommender system is ideal, the action recorded would be $a$ which reveals the user's true interest. Since the actual recommender system to collect the data is not ideal, we have no direct access to $a$, but rather to a noisy version of it $\varepsilon(a)$. 


We assume that the noisy portion of the training data originates from users who are presented with a list of items that are not suitable for them, rendering their reactions to these recommendations uninformative. Conversely, interactions achieving higher goals are generally less noisy, meaning $\varepsilon(a)$ is closer to $a$.
To illustrate this, consider a scenario where a user clicks two recommended items ($v_1$ and $v_2$) under the same state. After clicking $v_1$, the user chooses to quit the system, while he stays longer and browses more items after clicking $v_2$. This indicates that the goal (i.e., cumulative reward) with $v_2$ is larger than that with $v_1$. In this case, we argue that  $v_2$ should be considered as the user’s truly preferred item over  $v_1$. That's because the act of quitting, which results in a smaller goal, indicates user dissatisfaction with the previous recommendation, even though he did click $v_1$ before.
Our proposed MOGCSL can model and leverage this mechanism based on multi-dimensional goals, which serve as a description of the \textbf{future effects of current actions on multiple objectives}. 
Specifically, by incorporating multi-dimensional goals as input, MOGCSL can effectively differentiate between noisy and noiseless samples in the training data. During inference, when high goals are specified as input, the model can make predictions based primarily on the patterns learned from the corresponding noiseless interaction data.

To empirically validate this effect, we conduct experiments that are illustrated in Appendix \ref{sec:denoise_exp} due to limited space. The results demonstrate the denoising capability of MOGCSL.

\section{Experiments}

In this section, we introduce our experiments on two e-commerce datasets, aiming to address the following research questions: 1) \textbf{RQ1.} How does MOGCSL perform when compared to previous methods for multi-objective learning in recommender systems? 2) \textbf{RQ2.} How does MOGCSL mitigate the complexity challenges, including space and time complexity, as well as the intricacies of parameter tuning encountered in prior research? 3) \textbf{RQ3.} How does the goal-generation module for inference perform when compared to strategies based on simple statistics?

\subsection{Experimental Setup}

\textbf{Datasets}  We use two public datasets: Challenge15 and RetailRocket.
Both of them include binary labels indicating whether a user clicked or purchased the currently recommended item.  More details of the datasets, metrics, and implementation specifics are provided in Appendix \ref{sec:expdetail}.



\textbf{Baselines}  Prior research on multi-objective learning encompass both model structure adaptation and optimization constraints. In our experiments, we consider two representative model architectures: Shared-Bottom \cite{ma2018modeling} and MMOE \cite{ma2018modeling}. For works on optimization constraints, we compare four methods: Fixed-Weights \cite{wang2016multi} assigns fixed weights for different objectives based on grid search; DWA \cite{liu2019end} dynamically adjusts weights by considering the dynamics of loss change; PE \cite{lin2019pareto} generates Pareto-efficient recommendations across multiple objectives; FAMO \cite{liu2024famo} adjusts weights to achieve balanced task loss reduction while maintaining relatively low space and time complexity.

Following previous research\cite{yu2020gradient}, we consider all these optimization methods for each model architecture, resulting in eight baselines denoted as Share-Fix, Share-DWA, Share-PE, Share-FAMO, MMOE-Fix, MMOE-DWA, MMOE-PE, MMOE-FAMO. Additionally, we introduce a variant of a recent work called PRL \cite{xin2022rethinking}, which firstly applied GCSL to recommender systems. Specifically, similar to classic multi-objective methods, we compute the weighted summation of rewards from different objectives at each timestep. Then the overall cumulative reward is calculated as the goal, which is a scalar following conventional GCSL. We call this variant as MOPRL. Since we formulate our problem as an MOMDP, we also incorporate a baseline called SQN \cite{xin2020self} that applies offline reinforcement learning for sequential recommendation. Similarly, the aggregate reward is derived by the weighted summation of all objective rewards.


\textbf{Evaluation Metrics} We employ two widely recognized information retrieval metrics to evaluate model performance in top-$k$ recommendation: Hit Ratio (HR@$k$) and  Normalized discounted cumulative gain (NDCG@$k$). We use the abbreviation NG to denote NDCG in the tables. For each experiment, the mean and standard deviation over 5 seeds are reported.


\begin{table*}[t]\footnotesize
    \centering
    \begin{threeparttable}
    \caption{Comparison between MOGCSL and other baselines on RetailRocket and Challenge15 datasets.  The mean and standard deviation over 5 seeds are reported. Boldface denotes the best results.}
    \label{tab:main_1}
    \begin{tabular}{p{2.0cm}p{0.65cm}<{\centering}p{0.65cm}<{\centering}p{0.65cm}<{\centering}p{0.65cm}<{\centering}p{0.65cm}<{\centering}p{0.65cm}<{\centering}p{0.65cm}<{\centering}p{0.65cm}<{\centering}}
    \toprule
    \multirow{2}{*}{[RetailRocket]}&\multicolumn{4}{c}{Purchase (\%)}&\multicolumn{4}{c}{Click (\%)}\cr
    \cmidrule(lr){2-5} \cmidrule(lr){6-9}
    &\multicolumn{1}{c}{HR@10\scriptsize}&\multicolumn{1}{c}{NG@10\scriptsize}&\multicolumn{1}{c}{HR@20\scriptsize}&\multicolumn{1}{c}{NG@20\scriptsize}&\multicolumn{1}{c}{HR@10\scriptsize}&\multicolumn{1}{c}{NG@10\scriptsize}&\multicolumn{1}{c}{HR@20}&\multicolumn{1}{c}{NG@20\scriptsize}\cr
    \midrule
    Share-Fix & 48.57\tiny{$\pm$0.17} & 45.79\tiny{$\pm$0.09} & 49.47\tiny{$\pm$0.10} & 46.01\tiny{$\pm$0.08} & 35.51\tiny{$\pm$0.24} & 25.85\tiny{$\pm$0.16} & 40.15\tiny{$\pm$0.20} & 27.03\tiny{$\pm$0.14} \\
    Share-DWA & 48.11\tiny{$\pm$0.04} & 45.83\tiny{$\pm$0.08} & 48.64\tiny{$\pm$0.08} & 45.96\tiny{$\pm$0.06} & 34.20\tiny{$\pm$0.27} & 25.53\tiny{$\pm$0.12} & 38.43\tiny{$\pm$0.31} & 26.60\tiny{$\pm$0.13} \\
    Share-PE & 48.67\tiny{$\pm$0.12} & 45.94\tiny{$\pm$0.03} & 49.42\tiny{$\pm$0.04} & 46.13\tiny{$\pm$0.02} & 35.69\tiny{$\pm$0.08} & \textbf{26.20\tiny{$\pm$0.08}} & 40.28\tiny{$\pm$0.11} & \textbf{27.37\tiny{$\pm$0.07}} \\
    Share-FAMO & 48.92\tiny{$\pm$0.17} & 46.11\tiny{$\pm$0.10} & 50.19\tiny{$\pm$0.09} & 46.78\tiny{$\pm$0.13} & 35.97\tiny{$\pm$0.14} & {26.17\tiny{$\pm$0.09}} & 40.71\tiny{$\pm$0.11} & 27.35\tiny{$\pm$0.18} \\
    MMOE-Fix & 47.74\tiny{$\pm$0.09} & 44.01\tiny{$\pm$0.05} & 48.61\tiny{$\pm$0.11} & 44.23\tiny{$\pm$0.04} & 35.29\tiny{$\pm$0.16} & 25.67\tiny{$\pm$0.09} & 40.04\tiny{$\pm$0.26} & 26.87\tiny{$\pm$0.11} \\
    MMOE-DWA & 47.78\tiny{$\pm$0.40} & 44.57\tiny{$\pm$0.13} & 48.44\tiny{$\pm$0.24} & 44.79\tiny{$\pm$0.09} & 35.68\tiny{$\pm$0.46} & 26.13\tiny{$\pm$0.29} & 40.22\tiny{$\pm$0.57} & 27.28\tiny{$\pm$0.32} \\
    MMOE-PE & 46.58\tiny{$\pm$0.22} & 43.72\tiny{$\pm$0.15} & 47.37\tiny{$\pm$0.11} & 43.94\tiny{$\pm$0.14} & 35.39\tiny{$\pm$0.27} & 26.19\tiny{$\pm$0.11} & 39.78\tiny{$\pm$0.39} & 27.31\tiny{$\pm$0.14} \\
     MMOE-FAMO & 47.93\tiny{$\pm$0.32} & 46.42\tiny{$\pm$0.23} & 51.24\tiny{$\pm$0.19} & 47.15\tiny{$\pm$0.21} & 35.92\tiny{$\pm$0.21} & 26.14\tiny{$\pm$0.17} & 41.15\tiny{$\pm$0.39} & 26.62\tiny{$\pm$0.14} \\
    SQN & 62.03\tiny{$\pm$0.31} & 48.06\tiny{$\pm$0.24} & 66.19\tiny{$\pm$0.28} & 49.14\tiny{$\pm$0.19} & 33.02\tiny{$\pm$0.47} & 22.79\tiny{$\pm$0.25} & 38.03\tiny{$\pm$0.48} & 24.06\tiny{$\pm$0.32} \\
    MOPRL & 61.18\tiny{$\pm$0.19} & 50.74\tiny{$\pm$0.10} & 64.76\tiny{$\pm$0.25} & 51.65\tiny{$\pm$0.02} & 33.99\tiny{$\pm$0.11} & 24.31\tiny{$\pm$0.08} & 38.98\tiny{$\pm$0.08} & 25.57\tiny{$\pm$0.08} \\
    \hline
    MOGCSL & \textbf{65.43\tiny{$\pm$0.15} }& \textbf{52.92\tiny{$\pm$0.11} }& \textbf{69.28\tiny{$\pm$0.14}} & \textbf{53.90\tiny{$\pm$0.14}} & \textbf{36.30\tiny{$\pm$0.25}} & 25.24\tiny{$\pm$0.15} & \textbf{41.92\tiny{$\pm$0.55}} & 26.67\tiny{$\pm$0.19} \\
    
    \bottomrule
    \end{tabular}

    \vspace{0.3cm}

    \begin{tabular}{p{2.0cm}p{0.65cm}<{\centering}p{0.65cm}<{\centering}p{0.65cm}<{\centering}p{0.65cm}<{\centering}p{0.65cm}<{\centering}p{0.65cm}<{\centering}p{0.65cm}<{\centering}p{0.65cm}<{\centering}}
    \toprule
    \multirow{2}{*}{[Challenge15]}&\multicolumn{4}{c}{Purchase (\%)}&\multicolumn{4}{c}{Click (\%)}\cr
    \cmidrule(lr){2-5} \cmidrule(lr){6-9}
    &\multicolumn{1}{c}{HR@10}&\multicolumn{1}{c}{NG@10}&\multicolumn{1}{c}{HR@20}&\multicolumn{1}{c}{NG@20}&\multicolumn{1}{c}{HR@10}&\multicolumn{1}{c}{NG@10}&\multicolumn{1}{c}{HR@20}&\multicolumn{1}{c}{NG@20}\cr
    \midrule
    Share-Fix & 38.18\tiny{$\pm$0.10} & 25.47\tiny{$\pm$0.26} & 43.97\tiny{$\pm$0.18} & 26.93\tiny{$\pm$0.20} & 41.61\tiny{$\pm$0.30} & 25.77\tiny{$\pm$0.16} & 49.19\tiny{$\pm$0.43} & 27.70\tiny{$\pm$0.19} \\
    Share-DWA & 38.27\tiny{$\pm$0.18} & 25.63\tiny{$\pm$0.08} & 43.95\tiny{$\pm$0.19} & 27.07\tiny{$\pm$0.08} & 41.49\tiny{$\pm$0.24} & 25.90\tiny{$\pm$0.16} & 48.90\tiny{$\pm$0.20} & 27.77\tiny{$\pm$0.14} \\
    Share-PE & 38.92\tiny{$\pm$0.09} & 25.83\tiny{$\pm$0.13} & 44.82\tiny{$\pm$0.12} & 27.32\tiny{$\pm$0.07} & 42.46\tiny{$\pm$0.16} & 26.39\tiny{$\pm$0.06} & 50.05\tiny{$\pm$0.17} & 28.32\tiny{$\pm$0.06} \\
    Share-FAMO & 39.06\tiny{$\pm$0.13} & 25.94\tiny{$\pm$0.21} & 44.97\tiny{$\pm$0.18} & 27.65\tiny{$\pm$0.09} & 43.25\tiny{$\pm$0.13} & 27.02\tiny{$\pm$0.11} & 50.58\tiny{$\pm$0.26} & 28.94\tiny{$\pm$0.14} \\
    MMOE-Fix & 35.34\tiny{$\pm$0.12} & 23.87\tiny{$\pm$0.07} & 40.68\tiny{$\pm$0.09} & 25.22\tiny{$\pm$0.12} & 43.82\tiny{$\pm$0.16} & 27.33\tiny{$\pm$0.09} & 51.42\tiny{$\pm$0.19} & 29.26\tiny{$\pm$0.10} \\
    MMOE-DWA & 37.04\tiny{$\pm$0.40} & 24.88\tiny{$\pm$0.13} & 42.64\tiny{$\pm$0.24} & 26.30\tiny{$\pm$0.09} & 42.20\tiny{$\pm$0.46} & 26.45\tiny{$\pm$0.29} & 49.48\tiny{$\pm$0.57} & 28.30\tiny{$\pm$0.32} \\
    MMOE-PE & 36.40\tiny{$\pm$0.36} & 24.66\tiny{$\pm$0.19} & 41.52\tiny{$\pm$0.33} & 25.96\tiny{$\pm$0.19} & \textbf{44.04\tiny{$\pm$0.09}} & \textbf{27.44\tiny{$\pm$0.03}} & \textbf{51.60\tiny{$\pm$0.07}} & \textbf{29.37\tiny{$\pm$0.03}} \\
    MMOE-FAMO & 37.92\tiny{$\pm$0.56} & 25.43\tiny{$\pm$0.22} & 43.63\tiny{$\pm$0.29} & 27.12\tiny{$\pm$0.15} & 43.71\tiny{$\pm$0.55} & 26.98\tiny{$\pm$0.38} & 50.98\tiny{$\pm$0.66} & 29.03\tiny{$\pm$0.39} \\
    SQN & 55.05\tiny{$\pm$0.42} & 34.35\tiny{$\pm$0.28} & 64.42\tiny{$\pm$0.53} & 36.74\tiny{$\pm$0.34} & 42.55\tiny{$\pm$0.35} & 25.80\tiny{$\pm$0.23} & 50.66\tiny{$\pm$0.32} & 27.86\tiny{$\pm$0.26} \\
    MOPRL & 54.79\tiny{$\pm$0.37} & 35.37\tiny{$\pm$0.26} & 63.10\tiny{$\pm$0.45} & 37.49\tiny{$\pm$0.27} & 42.14\tiny{$\pm$0.21} & 25.62\tiny{$\pm$0.18} & 50.18\tiny{$\pm$0.25} & 27.66\tiny{$\pm$0.19} \\
    \hline
    MOGCSL & \textbf{56.82\tiny{$\pm$0.25}} & \textbf{35.93\tiny{$\pm$0.15}} & \textbf{65.64\tiny{$\pm$0.55}} & \textbf{38.17\tiny{$\pm$0.19}} & {42.47}\tiny{$\pm$0.15} & {25.64}\tiny{$\pm$0.11} & {50.52}\tiny{$\pm$0.14} & {27.73}\tiny{$\pm$0.11}  \\
    \bottomrule
    \end{tabular}

    \end{threeparttable}
\end{table*}

\subsection{Performance Comparison (RQ1)}\label{sec:performance}

We begin by conducting experiments to compare the performance of MOGCSL and selected baselines in terms of top-$k$ recommendation. The experimental results are presented in Table \ref{tab:main_1}. It's worth mentioning that a straightforward strategy based on training set statistics is employed to determine the inference goals in PRL \cite{xin2022rethinking}. Specifically, at each timestep in inference, the goal are set as the average cumulative reward from offline data at the same timestep, multiplied by a hyper-parameter factor $\lambda$ that is tuned using the validation set. To ensure a fair and meaningful comparison, we adopt the same strategy here for determining inference goals in MOGCSL. The comparison between different goal-choosing strategies is discussed in Section \ref{sec:goal-generation}.

On RetailRocket, MOGCSL significantly outperforms previous multi-objective benchmarks in terms of purchase-related metrics. Regarding click metrics, MOGCSL achieves the best performance on HR, while Share-PE slightly outperforms it on NDCG. However, the performance gap between Share-PE and MOGCSL for purchase-related metrics ranges from 17\% to 20\%, whereas Share-PE only marginally outperforms MOGCSL on NDCG for purchase by less than 1\%. 
 Additionally, we observe that the more complex architecture design, MMOE, can perform worse than the simpler Shared-Bottom structure in many cases. Surprisingly, a naive optimization strategy based on fixed loss weights can outperform more advanced methods like DWA across several metrics (e.g., Share-Fix vs Share-DWA). These findings highlight the limitations of previous approaches that rely on assumptions about model architectures or optimization constraints, which may not be necessarily true in general environments. Similar trends are observed on Challenge15. While MMOE-PE performs slightly better on click metrics by 1-2\%, MOGCSL achieves a substantial performance improvement on the more important purchase metrics by 11-20\%.

Apart from previous benchmarks for multi-objective learning,  MOGCSL also exhibits significant and consistent performance improvements on both datasets compared to offline RL based SQN and the variant MOPRL created on standard GCSL. Especially, at each timestep, the overall reward of them is calculated as the weighted sum of rewards across all objectives. In our experiments, it's defined as $r'=w_c r^c + w_p r^p$, where $r^c$ and $r^p$ are click and purchase reward and $w_c + w_p = 1$. Then the goal in MOPRL is derived by calculating the cumulative rewards as a scalar. In contrast, MOGCSL takes the goal as a vector, allowing the disentanglement of rewards for different objectives along different dimensions. Notably, no additional summation weights or other constraints are required.
We have conducted experiments to compare the performance of MOGCSL to MOPRLs with different weight combinations. 
The results show that MOGCSL consistently outperforms MOPRL across all weight combinations on both click and purchase metrics, demonstrating that representing the goal as a multi-dimensional vector enhances the effectiveness of GCSL on multi-objective learning. See the figure for the comparison in Appendix \ref{sec:moprls}.

\subsection{Complexity Comparison (RQ2)}

Apart from the performance improvement, MOGCSL also benefits from seamless integration 
\begin{wraptable}{rh}{0.45\textwidth}
\renewcommand\arraystretch{1.0}
\footnotesize
\centering
\caption{Comparison of time and space complexity on RetailRocket. }
\label{tab:complexity}
\begin{tabular}{ccc}
\hline
&Model Size&Training time \\\hline
Share-Fix&14.0M&9.6Ks\\
Share-DWA&14.0M&5.3Ks\\
Share-PE&14.0M&5.6Ks\\
Share-FAMO&14.0M&5.1Ks\\
MMOE-Fix&14.1M&10.2Ks\\
MMOE-DWA&14.1M&9.5Ks\\
MMOE-PE&14.1M&60.5Ks\\
MMOE-FAMO&14.1M&8.8Ks\\
SQN&11.3M&10.5Ks\\
MOPRL&9.1M&3.2Ks\\
\hline
MOGCSL&9.1M&3.0Ks\\
\hline
\end{tabular}
\end{wraptable}
with classic sequential models, adding minimal additional complexity. During the training stage, the only extra complexity arises from relabeling one-step rewards with goals and including them as input to the sequential model.
In contrast, previous multi-objective learning methods often introduce significantly excess time and space complexity \cite{zhang2021survey}.
For instance, MMOE and Shared-Bottom both design separate towers for each objective \cite{ma2018modeling}, 
leading to a significant increase in model 
parameters as the number of tasks grows.
SQN needs an additional RL head for optimization on TD error. MOGCSL, in the other hand, only requires  a simple MLP layer for action projection.
In terms of time complexity, DWA and FAMO require recording and calculating loss change dynamics for each training epoch, while PE involves computing the inverse of a large parameter matrix to solve an optimization problem under KKT conditions. Additionally, tuning the weight combinations for multiple objectives using grid search in methods like Fix-Weight, SQN, and MOPRL is highly time-consuming, requiring approximately $O(m^n)$ repetitive experiments to identify a near-optimal combination, where $m$ is the size of the search space per dimension and $n$ is the number of objectives. In contrast, MOGCSL inherently avoids this weight-tuning process.

Table \ref{tab:complexity} summarizes the complexity comparison. It's evident that MOGCSL significantly benefits from a smaller model size and faster training speed, while concurrently achieving great performance.


\subsection{Goal-generation Strategy Comparison (RQ3)}\label{sec:goal-generation}

As introduced in Section \ref{sec:inference}, most previous research on GCSL decides the inference goals based on simple statistics on the training set. However, we demonstrate that the distribution of the goals achieved by the agent during inference should be jointly determined by the initial state, input goal and behavior policy. Based on that, we propose a novel algorithm (see Algorithm \ref{alg:inference}) that leverages CVAE to derive feasible and desirable goals for inference. Note that an utility principle $U(\bg)$ is required to evaluate the goodness of the multi-dimensional goals, which is generally preferable for ``high" goals but could be flexible with specific business requirements. In our experiments, we select the best goal $\tilde{\bg}^a_b$ from the set $\bG^a$ based on the following rule, which ensures that no goal within the achievable set is superior to the selected goal across all objectives:
\begin{equation}\small
\label{eq:U_g}
\tilde{\bg}^a_b = \tilde{\bg} \in \bG^a, \;
\mathrm {s.t.} \ \nexists \, \tilde{\bg}' \in \bG^a \setminus \tilde{\bg}\,,\ \tilde{g}'_i \geq \tilde{g}_i \ \forall i \in [1, d].
\end{equation}

We compare two variants of MOGCSL here. MOGCSL-S employs the statistical strategy introduced in Section \ref{sec:performance}, while MOGCSL-C utilizes the goal-choosing method based on CVAE (Algorithm \ref{alg:inference}). Due to limited space, the result table is shown in Appendix \ref{tab:analy}. Surprisingly, we observe that these two strategies do not significantly differ in overall performance across both datasets. While MOGCSL-C performs slightly better on RetailRocket, it exhibits worse performance on Challenge15. To investigate the reason, we conduct an additional experiment by varying the factor $\lambda$ for the inference goals of MOGCSL-S. The results reveals that the optimal performance is achieved when the factor is set between 1 and 2 for all metrics. When it grows larger, performance consistently declines. The figure is shown in Appendix \ref{sec:scalar_comp}. Interestingly, similar findings have been reported in prior research \cite{DecisionTransformer, xin2022rethinking, zheng2022online}, demonstrating that setting large inference goals can harm performance.

We posit that the sparsity of training data within the high-goal space may contribute to the suboptimal performance of more advanced goal-choosing methods. While we may find some potentially achievable high goals, the model lacks sufficient training data to learn effective actions to reach these goals. Notably, the mean cumulative reward across all trajectories in both datasets is only around 5.3 for click and 0.2 for purchase. Consequently, most training data demonstrates how to achieve relatively low goals, hindering the model’s ability to generalize effectively for larger goals in inference. We further conduct experiments on a dataset with higher average goals in Appendix \ref{sec:higher_goal} for validation.

The results provide several insights for selecting goal-choosing strategies when applying MOGCSL in practical applications. First, strategies based on simple statistics on the training data prove to be efficient and effective in many cases, particularly when low latency or reduced model complexity is required during inference.
Second, if we aim to further enhance performance using more advanced goal-choosing algorithms, access to a training set with more instances with high-valued goals could be crucial. Last, it's worth to note that we explored both a principled and a naive approach to choose ``high'' goals on multiple objectives, which is a notion that differs significantly from one-dimensional GCSL. And that both of these designs work well is itself a non-trivial finding.

\section{Conclusion}


In this work, we 
propose a novel framework named MOGCSL for multi-objective recommendation. MOGCSL utilizes a vectorized goal to disentangle the representation of different objectives. Building upon GCSL, it can be directly combined with conventional sequential models and optimized through supervised learning, without requiring handcrafted model architecture or optimization constraints. Beyond training process, we theoretically analyze the properties of inference goals and propose a novel goal-generation algorithm accordingly. Extensive experiments demonstrate the superiority of MOGCSL in both effectiveness and efficiency. 
For future work, we aim to explore a more effective goal-generation strategy for inference, which may necessitate a change in the training paradigm. 



\bibliographystyle{unsrtnat}
\bibliography{reference}

\newpage
\appendix

\section{Proof of Theorem 1}\label{sec:proof}

We begin by first proving the following lemma.

\begin{lemma}
\label{lemma}
Assume that the environment is modeled as am MOMDP. Consider a trajectory $\tau$ that is generated by the policy $\pi(a|\bs, \bg)$ given the initial state $\bs_1$ and goal $\bg_1$, the distributions of $\bs_t$ and $(\bs_t$, $\bg_t)$ at each timestep are both determined by $(\bs_1, \bg_1, \pi(a|\bs, \bg))$.
\end{lemma}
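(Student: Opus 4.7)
The plan is to prove the lemma by induction on $t$, establishing the stronger claim that the joint distribution of $(\bs_t, \bg_t)$ is determined by $(\bs_1, \bg_1, \pi)$; the marginal statement for $\bs_t$ then follows by summing out $\bg_t$. The base case $t=1$ is immediate, since $(\bs_1, \bg_1)$ is the point mass specified by the conditioning and thus trivially depends only on that conditioning.

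For the inductive step, I would lean on two structural observations. First, in the MOMDP the next state satisfies $\bs_{t+1} \sim T(\cdot \mid \bs_t, a_t)$ with $a_t \sim \pi(\cdot \mid \bs_t, \bg_t)$, so the mechanism that produces $\bs_{t+1}$ from $(\bs_t, \bg_t)$ depends only on the pair $(\pi, T)$. Second, the definition $\bg_t = \sum_{t'=t}^{|\tau|} \br_{t'}$ from Eq.~(\ref{eq:goal}) yields the telescoping identity
\begin{equation*}
\bg_{t+1} \;=\; \bg_t - \br_t \;=\; \bg_t - R(\bs_t, a_t),
\end{equation*}
so $\bg_{t+1}$ is a deterministic function of $(\bs_t, \bg_t, a_t)$. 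Combining these gives the closed form for the one-step transition kernel of the joint process,
\begin{equation*}
\Pr\!\bigl(\bs_{t+1} = \bs',\, \bg_{t+1} = \bg' \,\big|\, \bs_t,\bg_t\bigr) \;=\; \sum_{a} \pi(a \mid \bs_t,\bg_t)\, T(\bs' \mid \bs_t, a)\, \Indi\bigl\{\bg' = \bg_t - R(\bs_t, a)\bigr\},
\end{equation*}
which depends only on $(\pi, T, R)$ and on the current $(\bs_t, \bg_t)$.

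By the inductive hypothesis the law of $(\bs_t, \bg_t)$ is determined by $(\bs_1, \bg_1, \pi)$, and integrating it against the kernel above produces the law of $(\bs_{t+1}, \bg_{t+1})$ using no extra inputs, closing the induction. Marginalizing out $\bg_t$ gives the corresponding statement for $\bs_t$ alone.

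The main obstacle is really bookkeeping rather than any deep technical step: the content is simply the recognition that $(\bs_t, \bg_t)$ is itself a Markov chain whose transition kernel is built entirely from $\pi$ and the MOMDP primitives $(T, R)$, and whose initial distribution is pinned down by the conditioning. One minor subtlety is that Eq.~(\ref{eq:goal}) is phrased as a sum over a trajectory of random length $|\tau|$; I would sidestep any need to reason about termination times by using the one-step decomposition $\bg_t = \br_t + \bg_{t+1}$ as the operative identity. Once the recursion is in hand, Theorem~\ref{thm} follows at once, because $\bg^a$ is a functional of the cumulative rewards along the trajectory, whose joint law with the state sequence is governed by exactly the kernel above.
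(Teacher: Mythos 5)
Your proposal is correct and follows essentially the same route as the paper: an induction on $t$ whose inductive step integrates the law of $(\bs_t,\bg_t)$ against the one-step kernel $\sum_{a}\pi(a\mid\bs_t,\bg_t)\,T(\bs'\mid\bs_t,a)\,\Indi\{\bg'=\bg_t-R(\bs_t,a)\}$, using the telescoping identity $\bg_{t+1}=\bg_t-\br_t$, exactly as in the paper's Eq.~(\ref{eq:s2g2}) and its inductive step. Your explicit framing of $(\bs_t,\bg_t)$ as a Markov chain with a kernel built only from $(\pi,T,R)$ is a slightly cleaner packaging of the same argument, not a different one.
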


\begin{proof}
First, for $t=1$ we have:
\begin{equation}
\label{eq:r1}
\Pr(\br_1=\br) = \sum_{a}  \Indi (\bR(\bs_1,a)=\br) \pi(a|\bs_1, \bg_1).
\end{equation}
Note that the reward function $\bR(\bs,a)$ is fixed for the given environment.
Then, we complete the proof by mathematical induction.
\\
\textbf{Statement}:The distributions of $\bs_t$ and $(\bs_t$, $\bg_t)$ are both  determined by $(\bs_1, \bg_1, \pi(a|\bs, \bg))$, for $t=2,3,...,|\tau|$.
\\
\textbf{Base case $t=2$}:  
Since $\bs_1$ and $\bg_1$ are given and fixed, we have:
\begin{equation}
\Pr(\bs_2=\bs) = \sum_{a} T(\bs|a,\bs_1) \pi(a|\bs_1,\bg_1).
\end{equation}
It's clear that $\bs_2 \sim f_{s_2}(\bs; \bs_1, \bg_1, \pi)$ where $f_{s_2}$ is a distribution determined by $(\bs_1, \bg_1, \pi)$.

For $(\bs_2, \bg_2)$, according the definition of $\bg_t$ in Eq. (\ref{eq:goal}), when a reward $\br_t$ is received, the desired goal on next timestep is $\bg_{t+1}=\bg_{t}-\br_t$. Combined with Eq. (\ref{eq:r1}), We have:
\begin{equation}
\label{eq:s2g2}
\Pr(\bs_2=\bs, \bg_2=\bg) = \sum_{a} \Indi (\bR(\bs_1,a)=\bg_1-\bg) T(\bs|\bs_1,a) \pi(a|\bs_1, \bg_1).
\end{equation}
Since the dynamic function $T(\bs'|\bs,a)$ is given, it's clear that $(\bs_2, \bg_2) \sim f_{s_2, g_2}(\bs, \bg; \bs_1, \bg_1, \pi)$.
\\
\textbf{Inductive Hypothesis}: Suppose the statement holds for all $t$ up to some $n$, $2 \leq n \leq |\tau| - 1$. 
\\\textbf{Inductive Step}: Let $t=n+1$, similar to the base case, we have:
\begin{equation}
\begin{split}
\Pr(\bs_{n+1}=\bs) = \sum_{a', \bs', \bg'}  T(\bs|\bs', a') \pi(a'|\bs', \bg') \Pr(\bs_n=\bs', \bg_n=\bg').
\end{split}
\end{equation}
\begin{equation}
\begin{split}
\Pr(\bs_{n+1}=\bs, \bg_{n+1}=\bg) = &\sum_{a',\bs',\bg'} [ \Indi (\bR(\bs',a')=\bg'-\bg) T(\bs|\bs',a') \\
&\cdot \pi(a'|\bs', \bg') \Pr(\bs_{n}=\bs', \bg_{n}=\bg')].
\end{split}
\end{equation}

According to the hypothesis that the distributions of $(\bs_n$, $\bg_n)$ is determined by $(\bs_1, \bg_1, \pi(a|\bs, \bg))$, it's easy to see that $\bs_{n+1} \sim f_{g_{n+1}}(\bg; \bs_1, \bg_1, \pi)$ and $(\bs_{n+1}, \bg_{n+1}) \sim f_{s_{n+1}, g_{n+1}}(\bs, \bg; \bs_1, \bg_1, \pi)$. 

As a result, the statement holds for $t=n+1$. By the principle of mathematical induction, the statement holds for all $t=2,3,...,|\tau|$. Apparently, that proves Lemma \ref{lemma}.
\end{proof}

Then, based on the lemma, we can prove Theorem \ref{thm}.

\begin{proof}
Let $|\tau|=T$, by definition we have:
\begin{equation}
\bg^a=\sum_{t=1}^{T}  \br_{t} ,
\end{equation}

Let $\bx_n=(\br_{n},...,\br_{1})$, according to the Markov property and Bayes' rules we have:
\begin{equation}
\label{eq:r_n+1,r_n}
\begin{split}
&P(\br_{n}|\br_{n-1},...,\br_1) = P(\br_n|\bx_{n-1})\\
& = \sum_{\bs_n} P(\br_n| \bs_n,\bx_{n-1}) P(\bs_n|\bx_{n-1})\\
& = \sum_{\bs_n} P(\br_n| \bs_n) \sum_{\bs_{n-1}} P(\bs_n|\bs_{n-1}, \bx_{n-1}) P(\bs_{n-1}|\bx_{n-1})\\
& = \sum_{\bs_n} P(\br_n| \bs_n) \sum_{\bs_{n-1}} P(\bs_n|\bs_{n-1}, \bx_{n-1}) ... \sum_{\bs_2} P(\bs_3|\bs_2, \bx_{n-1}) P(\bs_2|\bx_{n-1})\\
\end{split}
\end{equation}

For the first term, we have:
\begin{equation}
P(\br_{n}| \bs_{n}) = \sum_{a_n,\bg_n} \frac{ \Indi (\bR(\bs_n,a_n)=\br_n) \pi(a_n|\bs_n,\bg_n) P(\bs_n,\bg_n)}{P(\bs_n)}.
\end{equation}

Since $(\bs_1,\bg_1)$ is given and fixed, for each $m \in [2,n-1]$ we have:
\begin{equation}
\begin{split}
P(\bs_{m+1}| \bs_{m}, \bx_{n-1}) = \sum_{a_m} \pi(a_{m}| \bs_m, \bg_1 - \sum_{i=1}^{m-1} \br_i) 
 T(\bs_{m+1}|\bs_m, a_m) \Indi [R(\bs_m, a_m)=\br_m].
\end{split}
\end{equation}

Similarly, the last term can be written as:
\begin{equation}
\label{eq:s_2}
P(\bs_2| \bx_{n-1}) = \sum_{a_1}  \pi(a_{1}| \bs_1, \bg_1) T(\bs_{2}|\bs_1, a_1) \Indi [R(\bs_1, a_1)=\br_1].
\end{equation}

According to Lemma \ref{lemma}, the distributions of $\bs_n$ and $(\bs_n, \bg_n)$ are both determined by $(\bs_1, \bg_1, \pi(a|\bs, \bg))$. As a result, by Eq. (\ref{eq:r_n+1,r_n} - \ref{eq:s_2}), it's clear that the distribution of the conditional probability distribution $P(\br_{n+1}|\br_{n},...,\br_{1})$ is also determined by $(\bs_1, \bg_1, \pi(a|\bs, \bg))$. Then, the distribution of $\bg^a$ can be written as:
\begin{equation}
\begin{split}
&\Pr(\bg^a=\bg) = \idotsint_{\sum_{i=1}^{T} \br_i=\bg} f(\br_1,\br_2,...,\br_T) d\br_1 d\br_2 ... d\br_T\\
&=\idotsint_{\sum_{i=1}^{T} \br_i=\bg} f_1(\br_1) f_{2}(\br_2|\br_1) ...f_{T}(\br_T|\br_{T-1},...,\br_1) d\br_1 ... d\br_T
\end{split}
\end{equation}

Obviously, the distribution of $\bg^a$ is determined by $(\bs_1, \bg_1, \pi(a|\bs, \bg))$, which is exactly what Theorem \ref{thm} states.

\end{proof}

\section{Experiment Details}\label{sec:expdetail}

\subsection{Model Structure}

The model structure of MOGCSL is shown in Figure \ref{fig:structure}.

\begin{figure}[htbp]
\vspace{-1mm}
\centering
\begin{minipage}{0.7\linewidth}
    \centering
    \includegraphics[width=0.9\linewidth]{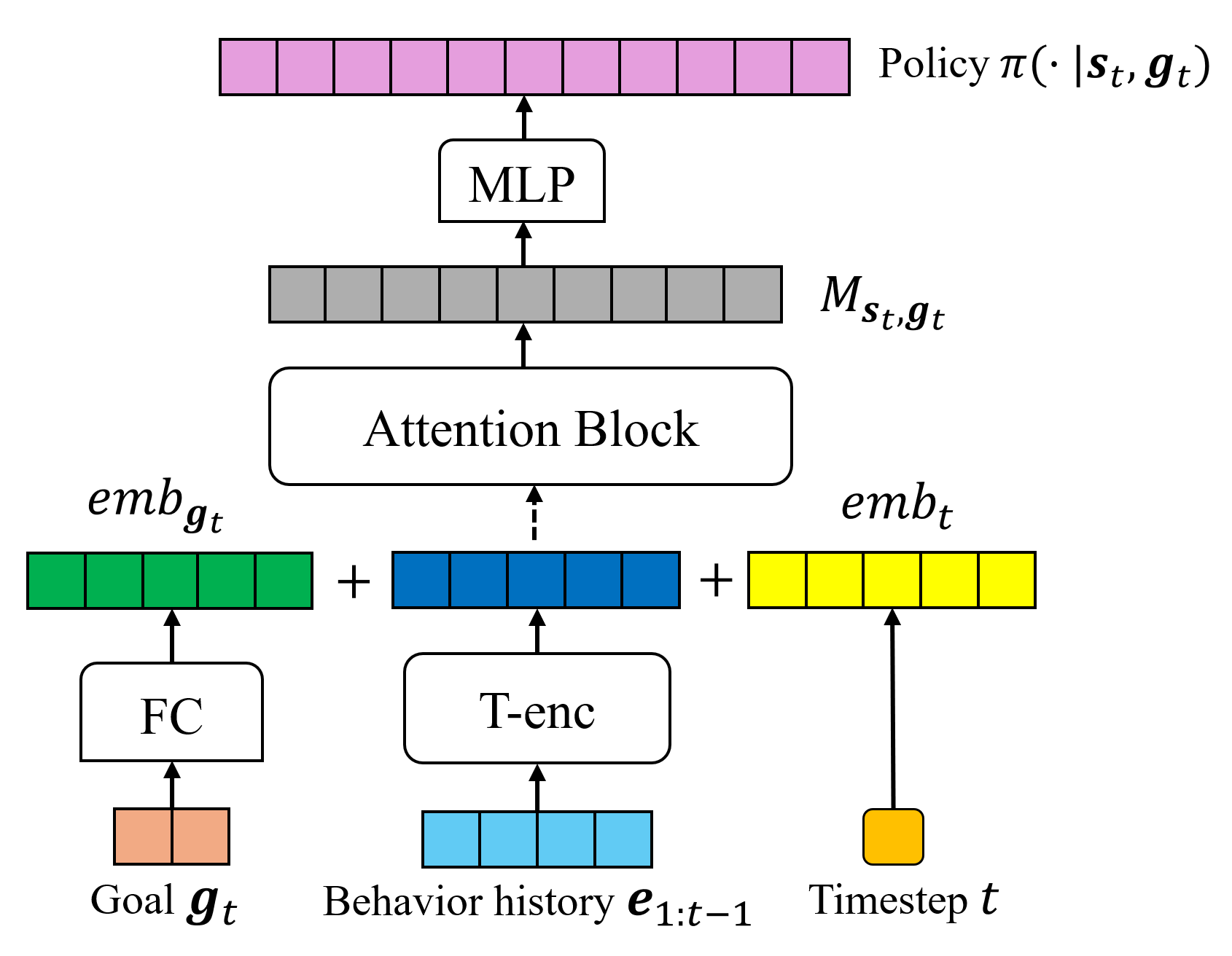}
\end{minipage}
\caption{Model structure of MOGCSL.}
\label{fig:structure}
\end{figure}

\subsection{Denoising Capability Experiments}\label{sec:denoise_exp}

To illustrate the denoising capability of MOGCSL,  we consider the same set-up introduced in Section \ref{sec:theor}, including definitions and notation of state, action, and reward.
As described before, we assume that the noisy portion of the training data originates from users who are presented with a list of items that are not suitable for them, rendering their choices for these recommendations not meaningful. Conversely, data samples with higher goals are generally less noisy, meaning $\varepsilon(a)$ is closer to $a$. We also record a long-term and multidimensional goal (e.g., the cumulative reward) $\bg=(g_1,...,g_n)$ at each interaction (known only at the end of the session, but recorded retroactively). 
Thus our training data is a sample from a distribution $\mathcal{D}$ of tuples $(\bs,\bg, \varepsilon(a))$, where the state of the user is represented by a vector $\bs$.

To empirically show the effect of this phenomenon in a simple set-up, we generate a dataset as follows:
(1): The states and the goals are sampled from two independent multivariate normal distributions with dimension of $50$ and $5$ respectively.
(2) The ground-truth action $a$ is entirely determined by $\bs$, whose ID is set to the number of coordinates of $\bs$ that are greater than 0.
(3) Define $\varepsilon(a)$ as: $\varepsilon(a)=a$ if $g_i>-1$ for all $i$; otherwise $\varepsilon(a)$ is uniformly random.

Formally, $\varepsilon(a)$ is determined by $\bg$ and $\bs$ as follows:
\begin{equation}\small
    \varepsilon(a) = ( \prod \limits_{i=0}^n \Indi (g_i>-1)) a + (1-\prod \limits_{i=0}^n \Indi (g_i>-1)) randint[1, N],
    \label{eq:epsilon_a}
\end{equation}
where $a=\sum_{j=0}^l \Indi (s_j>0)$.



Since MOGCSL is applicable to any supervised model by integrating goals as additional input features, we choose a simple XGBoost classifier \cite{chen2016xgboost} here for the sake of clarity. Specifically, we train three variants of XGBoost classifier on this data to predict the action given a state: (1) XGBoost-s: this variant only takes $\bs$ as input and ignores $\bg$. It cannot detect noisy instances in $\mathcal{D}$ because it lacks access to $\bg$. 
(2) XGBoost-ug: this variant is taken as a single-objective GCSL model, which takes $\bs$ and only the first coordinate $g_1$ of $\bg$ as input. Clearly, it's also unable to precisely distinguish noisy data since $\varepsilon(a)$ is determined by all dimensions of $\bg$ (as shown in Eq. (\ref{eq:epsilon_a})). (3) XGBoost-mg: this variant is based on our multi-objective GCSL, which takes both $\bs$ and $\bg$ as input. It is the only one capable of distinguishing all the noisy data by learning the determination pattern from $\bg$ and $\bs$ to $\varepsilon(a)$.

During inference stage, for XGBoost-ug and XGBoost-mg, we adopt a simple strategy to determine the goals: directly setting each dimension of $\bg$ to 1, which serves as a high value to satisfy the condition $g_i>-1$ for the noiseless samples where $\varepsilon(a)=a$.

The results are presented in Table \ref{tab:analy}. It is evident that XGBoost-mg achieves the best performance. By incorporating multi-dimensional goals as input, XGBoost-mg can effectively differentiate between noisy and noiseless samples in the training data based on MOGCSL. During inference, when a high goal is specified as input, the model can make predictions based solely on the patterns and knowledge learned from the noiseless data.

\begin{table}[]
\centering
\caption{{Comparison of XGBoost with different inputs. The mean and standard deviation over 5 seeds are reported.}}
\label{tab:analy}

\begin{tabular}{p{2.5cm}p{2.5cm}<{\centering}p{2.5cm}<{\centering}}
\hline
&Accuracy  & M-Logloss  \\\hline
XGBoost-s& \small{0.0576}\scriptsize{$\pm$0.0038} & \small{3.7595}\scriptsize{$\pm$0.0393} \\
XGBoost-ug& \small{0.0598}\scriptsize{$\pm$0.0009} & \small{3.7262}\scriptsize{$\pm$0.0281} \\
\hline
XGBoost-mg& \textbf{\small0.0634\scriptsize{$\pm$0.0027}} & \textbf{\small3.6603\scriptsize{$\pm$0.0097}} \\
\hline
\end{tabular}

\end{table}

\subsubsection{Datasets} 
We conduct experiments on two publicly available datasets: Challenge15 \footnote{https://recsys.acm.org/recsys15/challenge} and RetailRocket \footnote{https://www.kaggle.com/retailrocket/ecommerce-dataset}. 
They are both collected from online e-business platforms by recording users' sequential behaviours in recommendation sessions. Specifically, both datasets include binary labels indicating whether a user clicked or purchased the currently recommended item. Following previous research \cite{xin2022rethinking, xin2020self}, we filter out sessions with lengths shorter than 3 or longer than 50 to ensure data quality.

After preprocessing, the Challenge15 dataset comprises 200,000 sessions, encompassing 26,702 unique items, 1,110,965 clicks and 43,946 purchases. Similarly, the processed RetailRocket dataset consists of 195,523 sessions, involving 70,852 distinct items. It documents 1,176,680 clicks and 57,269 purchases. We partition them into training, validation, and test sets, maintaining an 8:1:1 ratio.

\subsubsection{Baseline Details} 
In the experiments, we compare two representative model architectures for multi-objective learning: 
\begin{itemize}[]
    \item \textbf{Shared-Bottom} \cite{ma2018modeling}: A classic model structure for multi-objective learning. The bottom of the model is a neural network shared across all objectives. On top of this shared base, separate towers are added for each objective, producing predictions specific to that objective.
    \item \textbf{MMOE} \cite{ma2018modeling}: A widely used multi-objective model architecture. It first maps inputs to multiple expert modules shared by all objectives. These experts contribute to each objective through designed gates. The final input for each tower is a weighted summation of the experts’ outputs.
\end{itemize}

Beyond architectural adaptations, other works focus on studying optimization constraints, mainly through adjusting weights of losses for different objectives. We compare the following methods:

\begin{itemize}[]
    \item \textbf{Fixed-Weights} \cite{wang2016multi}: A straightforward strategy that assigns fixed weights based on grid search results from the validation set. These weights remain constant throughout the whole training stage.
    \item \textbf{DWA} \cite{liu2019end}: This method aims to dynamically assign weights by considering the rate of loss change for each objective during recent training epochs. Generally, it tends to assign larger weights to objectives with slower loss changes.
    \item \textbf{PE} \cite{lin2019pareto}: It's designed for generating Pareto-efficient recommendations across multiple objectives. The model optimizes for Pareto efficiency, ensuring no further improvement in one objective comes at the expense of any others.
    \item \textbf{FAMO} \cite{liu2024famo}: This recent method aims to dynamically adjust the weights for multi-objective learning, achieving balanced task loss reduction while maintaining relatively low space and time complexity.
\end{itemize}

Note that to ensure a fair comparison, we employ the \emph{T-enc} and self-attention block introduced in Section \ref{sec:training} as the base module to encode sequential data for all compared baselines.

\subsection{Evaluation Metrics}\label{sec:metrdetail}

We employ two widely recognized information retrieval metrics to evaluate model performance in top-$k$ recommendation. Hit Ratio (HR@$k$) is to quantify the proportion of recommendations where the ground-truth item appears in the top-$k$ positions of the recommendation list \cite{hidasi2015session}. Normalized discounted cumulative gain (NDCG@$k$) further considers the positional relevance of ranked items, assigning greater importance to top positions during calculation \cite{kang2018self}. Given our dual objectives in experiments, we evaluate performance using HR@$k$ and NDCG@$k$ based on corresponding labels for click and purchase events (i.e., whether an item was clicked or purchased by the user).

\subsection{Implementation Details}
First, to ensure a fair comparison, we employ the transformer encoder and self-attention block introduced in Section \ref{sec:training} as the base module to encode the input features for all compared baselines. We preserve the 10 most recent historical interaction records to construct the state representation. For sequences shorter than 10 interactions, we pad them with a padding token. The embedding dimensions for both state and goal are set to 64, and the batch size is fixed at 256. We utilize the Adam optimizer for all models, tuning the learning rate within the range of [0.0001, 0.0005, 0.001, 0.005]. Additionally, for methods that necessitate manual assignment of weights, we fine-tune these weights in the range of [0.1, 0.2, …, 0.9] based on their performance on the validation set. The sample size $K$ in Algorithm \ref{alg:inference} is set to 20 in the experiments. All experiments are conducted five times, each with different random seeds, and we report the mean and standard deviation of the results. When comparing the time complexity, each experiment was conducted with a separate NVIDIA RTX 3090 and AMD 3960X.

\subsection{Comparison between MOGCSL and MOPRLs}\label{sec:moprls}

Figure \ref{fig:MOPRL} shows the performance comparison of MOGCSL to MOPRLs with different weight combinations 

\begin{figure}[htbp]
\centering
\begin{minipage}{0.48\linewidth}
    \centering
    \includegraphics[width=0.99\linewidth]{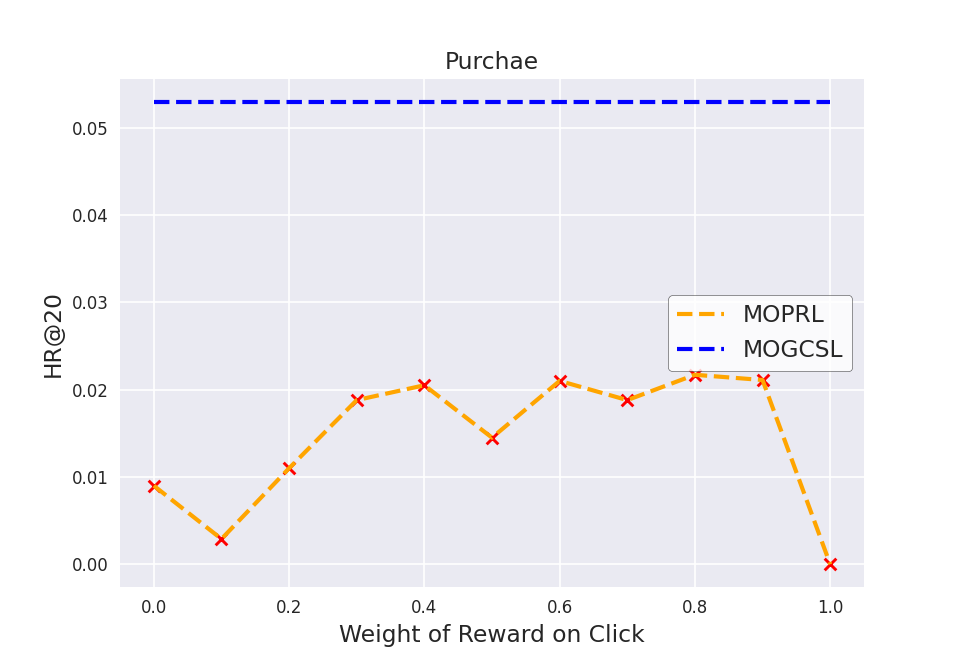}
\end{minipage}
\begin{minipage}{0.48\linewidth}
    \centering
    \includegraphics[width=0.99\linewidth]{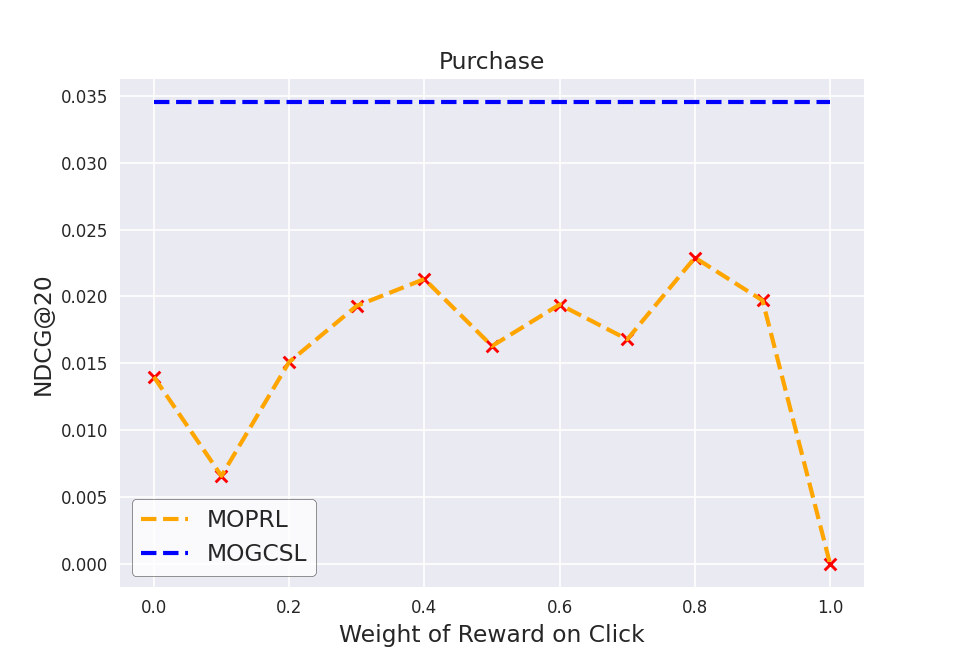}
\end{minipage}

\begin{minipage}{0.48\linewidth}
    \centering
    \includegraphics[width=0.99\linewidth]{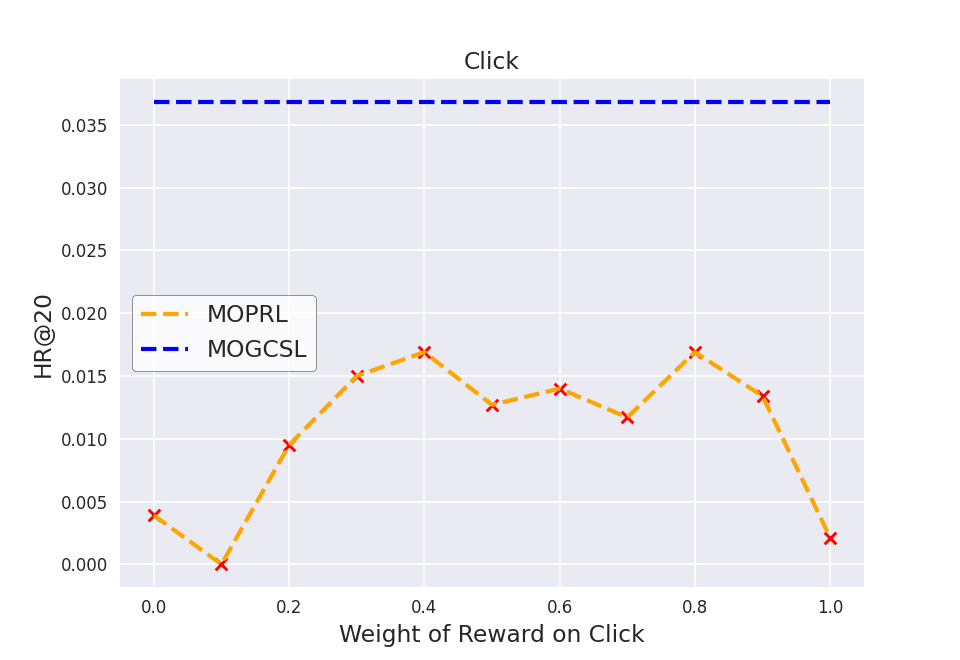}
\end{minipage}
\begin{minipage}{0.48\linewidth}
    \centering
    \includegraphics[width=0.99\linewidth]{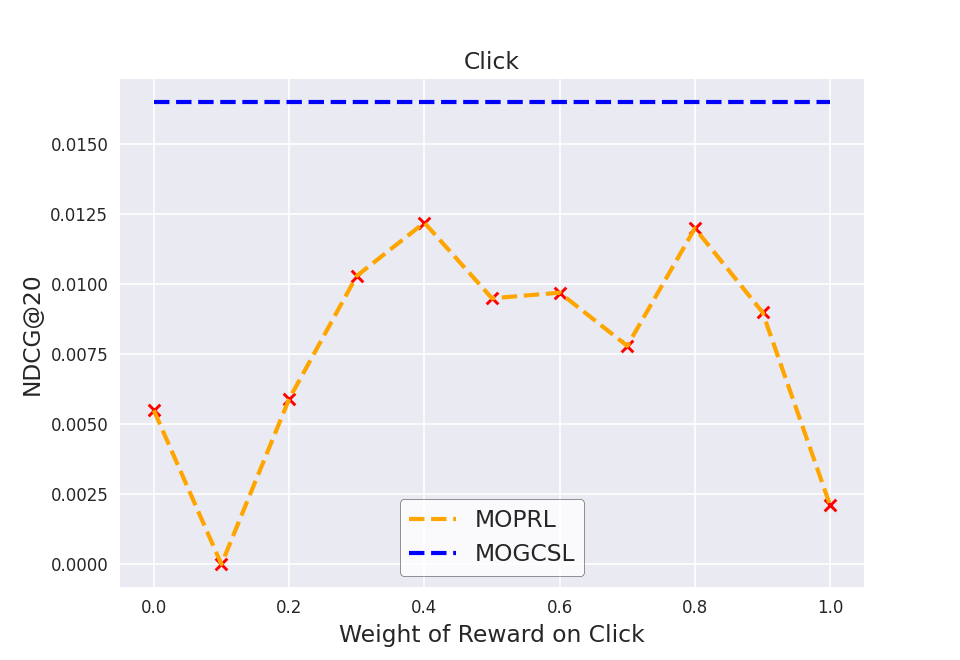}
\end{minipage}
\caption{Comparison between MOGCSL and MOPRLs with different weight combinations on RetailRocket. Performance of MOGCSL is not dependent on the weights.}
\label{fig:MOPRL}
\end{figure}

\subsection{Comparison of Goal-Choosing Strategies}

Table \ref{tab:main_3} presents the performance comparison of different goal-choosing strategies on MOGCSL.

\makeatletter
\makeatother

\begin{table*}[th]\small
    \centering
    \begin{threeparttable}
    \caption{Comparison between statistical strategy and CVAE-based method for goal-choosing.}
    \label{tab:main_3}
     \begin{tabular}{p{1.8cm}p{0.65cm}<{\centering}p{0.65cm}<{\centering}p{0.65cm}<{\centering}p{0.65cm}<{\centering}p{0.65cm}<{\centering}p{0.65cm}<{\centering}p{0.65cm}<{\centering}p{0.65cm}<{\centering}}
    \toprule
    \multirow{2}{*}{[RetailRocket]}&\multicolumn{4}{c}{Purchase (\%)}&\multicolumn{4}{c}{Click (\%)}\cr
    \cmidrule(lr){2-5} \cmidrule(lr){6-9}
    &\multicolumn{1}{c}{HR@10}&\multicolumn{1}{c}{NG@10}&\multicolumn{1}{c}{HR@20}&\multicolumn{1}{c}{NG@20}&\multicolumn{1}{c}{HR@10}&\multicolumn{1}{c}{NG@10}&\multicolumn{1}{c}{HR@20}&\multicolumn{1}{c}{NG@20}\cr
    \midrule
    MOGCSL-S & \textbf{65.43\tiny{$\pm$0.15} }& \textbf{52.92\tiny{$\pm$0.11} }& {69.28\tiny{$\pm$0.14}} & {53.90\tiny{$\pm$0.14}} & {36.30\tiny{$\pm$0.25}} & 25.24\tiny{$\pm$0.15} & {41.92\tiny{$\pm$0.55}} & 26.67\tiny{$\pm$0.19} \\
    MOGCSL-C & {65.01\tiny{$\pm$0.07} }& {52.89\tiny{$\pm$0.04} }& \textbf{69.34\tiny{$\pm$0.05}} & \textbf{54.00\tiny{$\pm$0.04}} & \textbf{36.54\tiny{$\pm$0.02}} & \textbf{25.41\tiny{$\pm$0.04}} & \textbf{42.20\tiny{$\pm$0.03}} & \textbf{26.84\tiny{$\pm$0.06}} \\

    \bottomrule
    \end{tabular}

    \vspace{0.2cm}

    \begin{tabular}{p{1.8cm}p{0.65cm}<{\centering}p{0.65cm}<{\centering}p{0.65cm}<{\centering}p{0.65cm}<{\centering}p{0.65cm}<{\centering}p{0.65cm}<{\centering}p{0.65cm}<{\centering}p{0.65cm}<{\centering}}
    \toprule
    \multirow{2}{*}{[Challenge15]}&\multicolumn{4}{c}{Purchase (\%)}&\multicolumn{4}{c}{Click (\%)}\cr
    \cmidrule(lr){2-5} \cmidrule(lr){6-9}
    &\multicolumn{1}{c}{HR@10}&\multicolumn{1}{c}{NG@10}&\multicolumn{1}{c}{HR@20}&\multicolumn{1}{c}{NG@20}&\multicolumn{1}{c}{HR@10}&\multicolumn{1}{c}{NG@10}&\multicolumn{1}{c}{HR@20}&\multicolumn{1}{c}{NG@20}\cr
    \midrule
    MOGCSL-S & \textbf{56.82\tiny{$\pm$0.25}} & \textbf{35.93\tiny{$\pm$0.15}} & \textbf{65.64\tiny{$\pm$0.55}} & \textbf{38.17\tiny{$\pm$0.19}} & \textbf{42.27\tiny{$\pm$0.15}} & \textbf{{25.64}\tiny{$\pm$0.11}} & \textbf{{50.52}\tiny{$\pm$0.14}} & \textbf{{27.73}\tiny{$\pm$0.11}}  \\
    MOGCSL-C & {55.13\tiny{$\pm$0.07} }& {35.04\tiny{$\pm$0.02} }& {63.98\tiny{$\pm$0.04}} & {37.30\tiny{$\pm$0.03}} & {42.14\tiny{$\pm$0.04}} & 25.37\tiny{$\pm$0.07} & {50.12\tiny{$\pm$0.09}} & 27.53\tiny{$\pm$0.05} \\

    \bottomrule
    \end{tabular}

    \end{threeparttable}
\end{table*}

\subsection{Comparison of MOGCSL-S w.r.t Factors}\label{sec:scalar_comp}

The details results of the performance comparison of MOGCSL-S with different factors for inference goal is shown in Figure \ref{fig:scalar}.

\begin{figure}[ht]
\centering
\begin{minipage}{0.48\linewidth}
    \centering
    \includegraphics[width=0.9\linewidth]{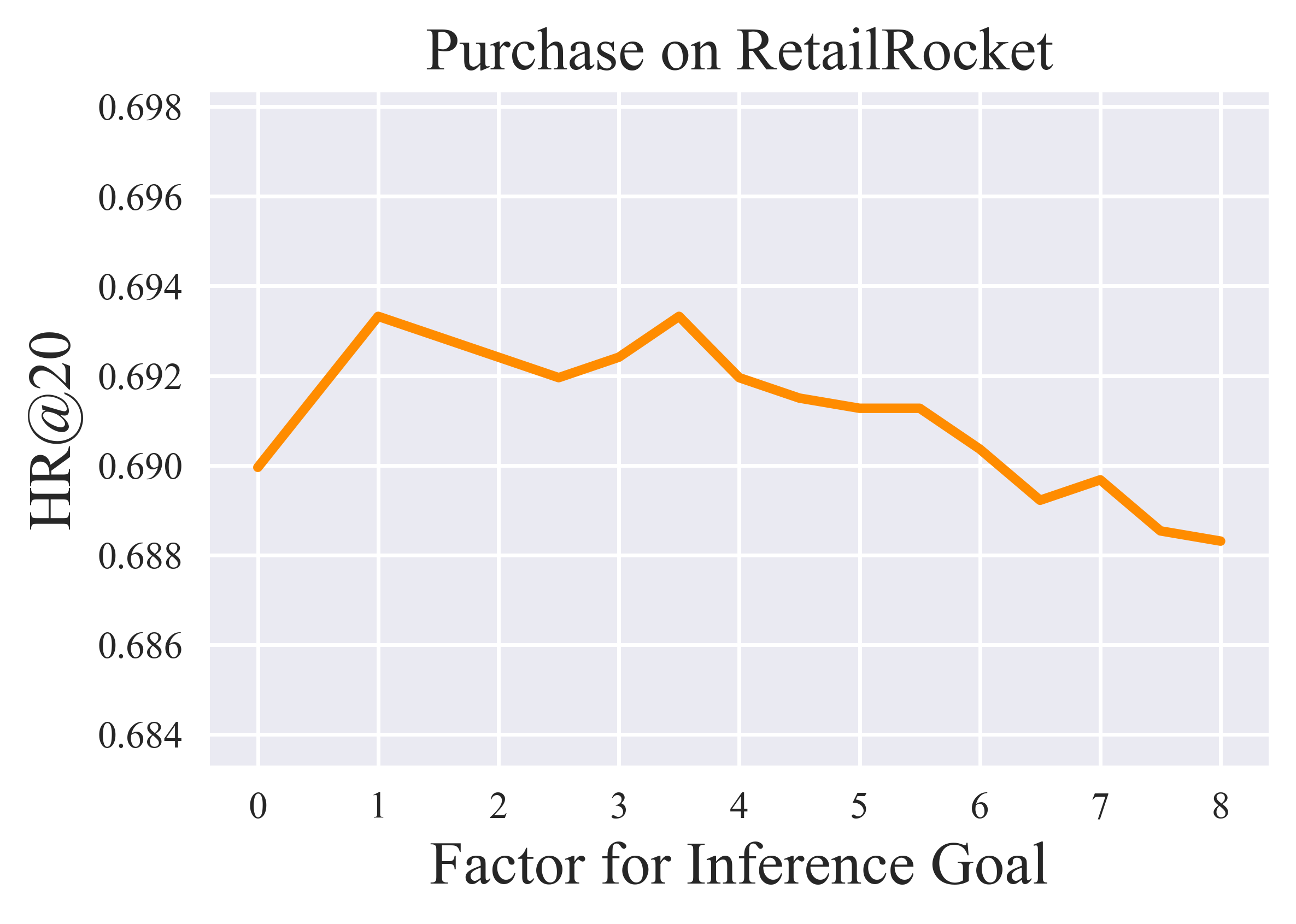}
\end{minipage}
\begin{minipage}{0.48\linewidth}
    \centering
    \includegraphics[width=0.9\linewidth]{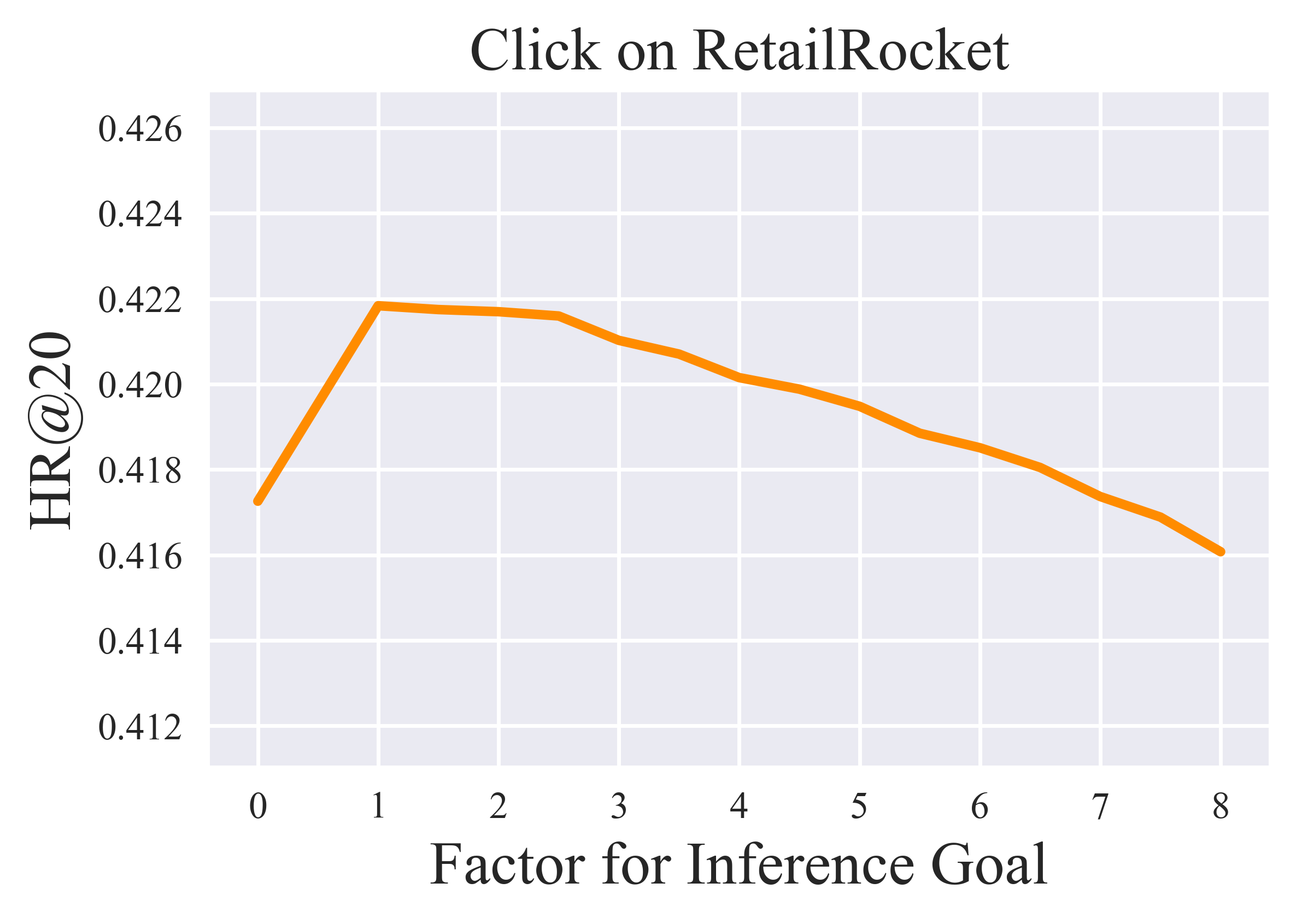}
\end{minipage}

\begin{minipage}{0.48\linewidth}
    \centering
    \includegraphics[width=0.9\linewidth]{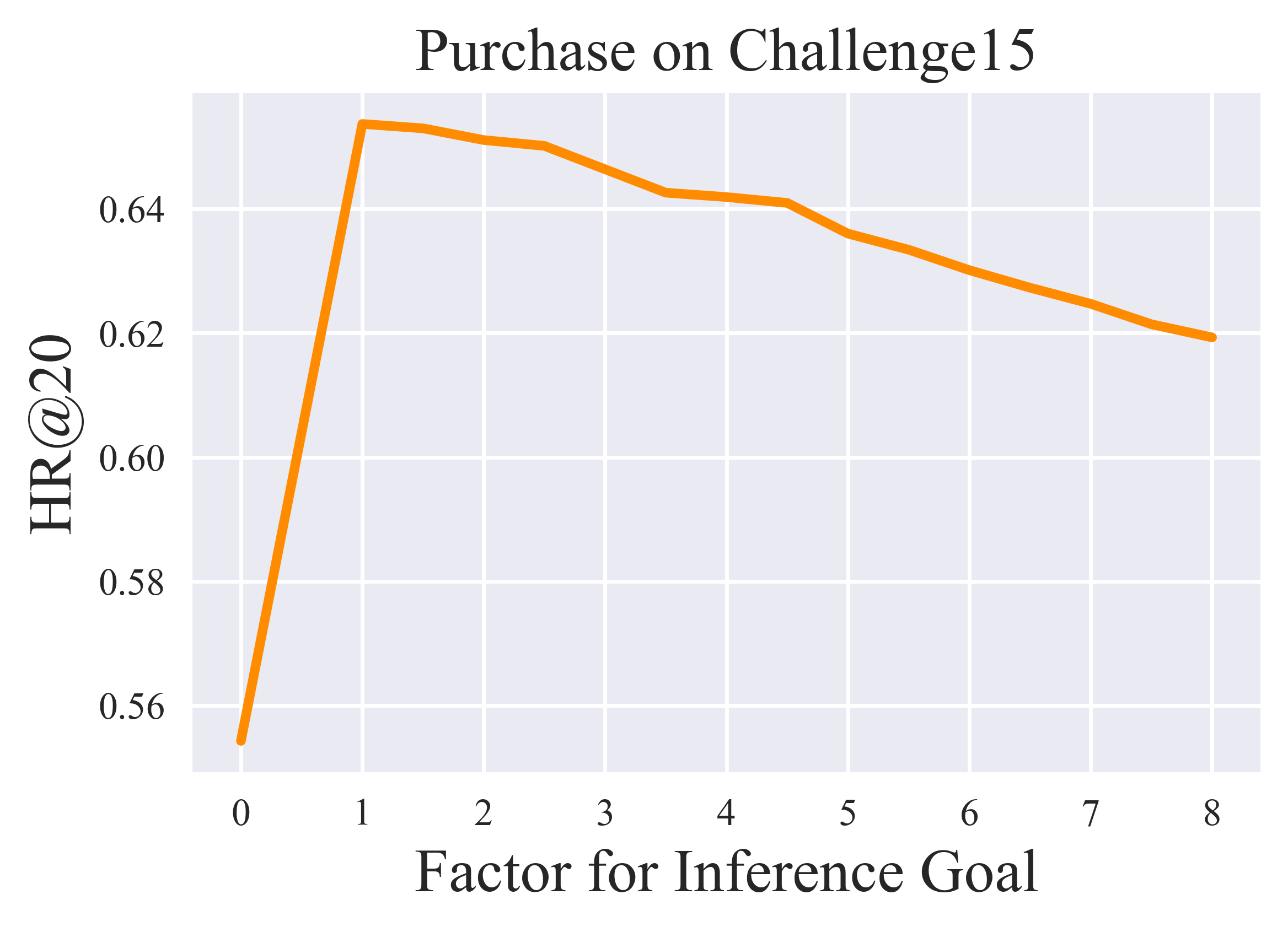}
\end{minipage}
\begin{minipage}{0.48\linewidth}
    \centering
    \includegraphics[width=0.9\linewidth]{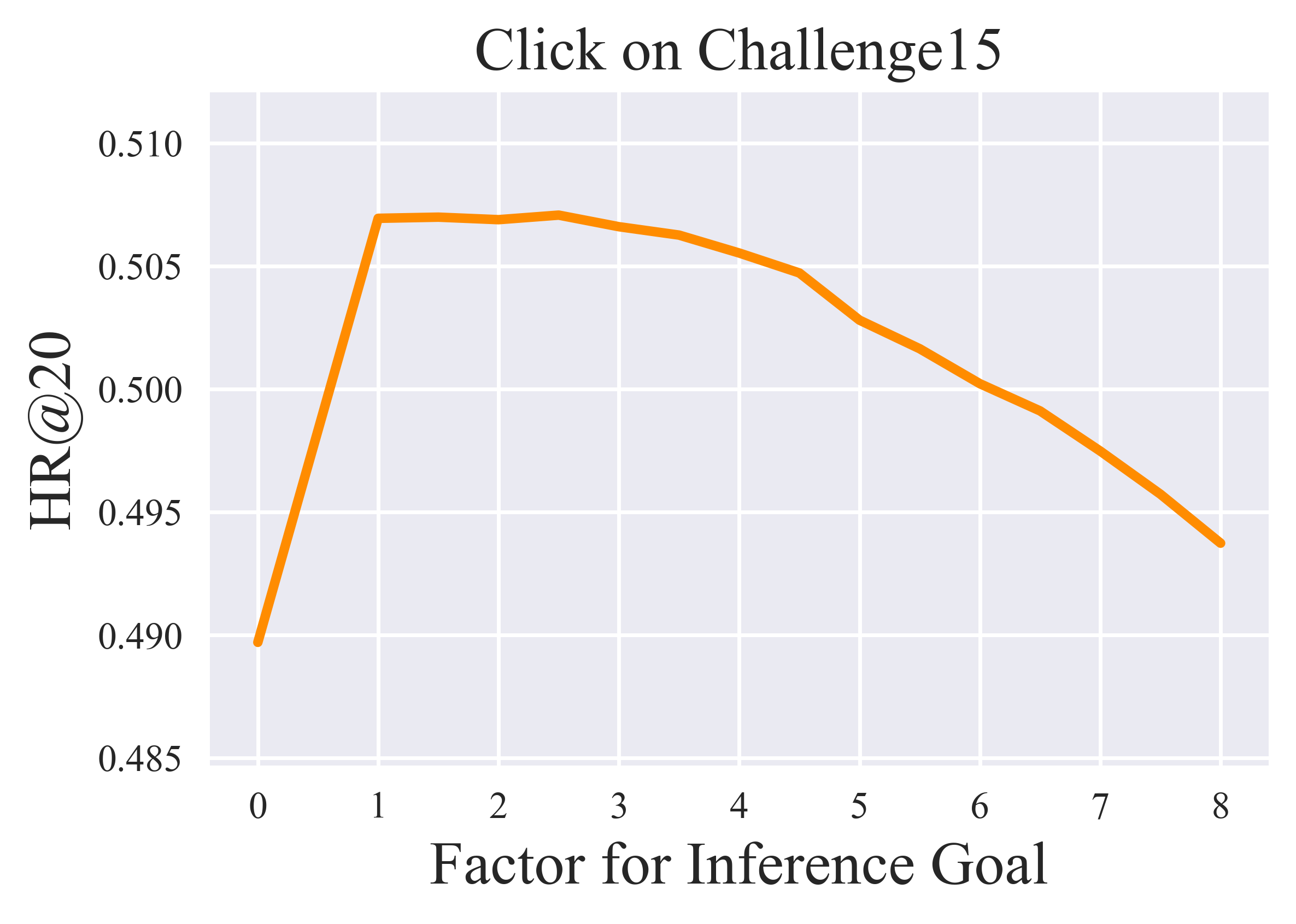}
\end{minipage}
\caption{Performance of MOGCSL-S with different factors for inference goal.}
\label{fig:scalar}
\end{figure}

\subsection{Comparison on Dataset with Higher Goals}\label{sec:higher_goal}

To validate the findings and assumptions discussed in Section \ref{sec:goal-generation} regarding the effects of data properties on goal-choosing strategies, we further conduct experiments on another dataset, Tenrec \cite{yuan2022tenrec}, which features significantly higher average goals. Specifically, we consider click and like as two dimensions of the multi-objective goals within our MOGCSL framework. The mean cumulative rewards across all trajectories in this dataset are approximately 28.3 for click and 1.2 for like, substantially exceeding those observed in the RetailRocket and Challenge15 datasets.

We then conduct experiments on Tenrec to compare the CVAE-based goal-choosing strategy and the statistical strategy introduced in Section \ref{sec:performance}. These two variants are denoted as MOGCSL-C and MOGCSL-S, respectively, as described in Section \ref{sec:goal-generation}.

Table \ref{tab:main_4} shows the comparison results, where MOGCSL-C significantly outperforms MOGCSL-S. Together with previous experiments conducted on datasets characterized by lower average goals, these results further validate our findings and reinforce practical insights regarding the appropriate use of simple versus advanced goal-selection strategies in relation to dataset properties.

\begin{table*}[th]\small
    \centering
    \begin{threeparttable}
    \caption{Comparison between statistical strategy and CVAE-based method for goal-choosing on Tenrec dataset.}
    \label{tab:main_4}

    \begin{tabular}{p{1.8cm}p{0.65cm}<{\centering}p{0.65cm}<{\centering}p{0.65cm}<{\centering}p{0.65cm}<{\centering}p{0.65cm}<{\centering}p{0.65cm}<{\centering}p{0.65cm}<{\centering}p{0.65cm}<{\centering}}
    \toprule
    \multirow{2}{*}{}&\multicolumn{4}{c}{Like (\%)}&\multicolumn{4}{c}{Click (\%)}\cr
    \cmidrule(lr){2-5} \cmidrule(lr){6-9}
    &\multicolumn{1}{c}{HR@10}&\multicolumn{1}{c}{NG@10}&\multicolumn{1}{c}{HR@20}&\multicolumn{1}{c}{NG@20}&\multicolumn{1}{c}{HR@10}&\multicolumn{1}{c}{NG@10}&\multicolumn{1}{c}{HR@20}&\multicolumn{1}{c}{NG@20}\cr
    \midrule
    MOGCSL-S & {5.96\tiny{$\pm$0.17}} & {2.15\tiny{$\pm$0.12}} & {6.93\tiny{$\pm$0.20}} & {2.70\tiny{$\pm$0.11}} & {4.87\tiny{$\pm$0.13}} & {{1.52}\tiny{$\pm$0.08}} & {{5.67}\tiny{$\pm$0.14}} & {{1.95}\tiny{$\pm$0.11}}  \\
    MOGCSL-C & \textbf{6.78\tiny{$\pm$0.07} }& \textbf{2.99\tiny{$\pm$0.02} }& \textbf{7.84\tiny{$\pm$0.05}} & \textbf{3.86\tiny{$\pm$0.04}} & \textbf{5.66\tiny{$\pm$0.05}} & \textbf{2.14\tiny{$\pm$0.05}} & \textbf{6.73\tiny{$\pm$0.03}} & \textbf{2.68\tiny{$\pm$0.07}} \\

    \bottomrule
    \end{tabular}

    \end{threeparttable}
\end{table*}

\section{Limitations and Future Works}\label{sec:limit}

Although MOGCSL has demonstrated superior effectiveness and efficiency for multi-objective recommendation tasks, we identify some potential limitations in our approach. First, the significant advantages provided by our advanced goal-choosing strategy are primarily evident on datasets with high average goals. While employing a simpler strategy remains effective for datasets with low average goals, exploring further improvements of the CVAE-based goal-choosing strategy under these conditions presents a valuable direction for future research. Second, the computational efficiency related to sampling and expectation estimation for new states encountering during inference could pose practical implementation challenges, especially for sequences with considerable length.

In addition to addressing these issues, future studies may extend MOGCSL, which provides a general framework applicable to various sequential decision-making problems, to other application domains, such as robotic control or marketing strategy optimization.

\section{Acknowledgment}
This work was supported by the Intuit University Collaboration Program.


\end{document}